\def\BState{\State\hskip-\ALG@thistlm}
\def\BStatex{\Statex\hskip-\parindent \hskip2ex}
\newcommand{\subdiv}{\bm{S}}
\newcommand{\ineqsetf}{\mathcal{E}}
\newcommand{\domainDone}{[0,1]}
\newcommand{\activeset}{\mathcal{J}}
\newcommand{\addvar}[1]{\, + \,}
\newcommand{\realset}[1]{\mathbb{R}^{#1}}
\newcommand{\normrnd}[2]{\mathcal{N}\left({#1,#2}\right)}
\newcommand{\BA}{\bm{A}}
\newcommand{\BC}{\bm{C}}
\newcommand{\Bc}{\bm{c}}
\newcommand{\Bl}{\bm{l}}
\newcommand{\Bu}{\bm{u}}
\newcommand{\BI}{\bm{I}}
\newcommand{\BL}{\bm{L}}
\newcommand{\BP}{\bm{P}}
\newcommand{\Bk}{\bm{k}} \newcommand{\BK}{\bm{K}}
\newcommand{\Bx}{\bm{x}} 
\newcommand{\By}{\bm{y}} \newcommand{\BY}{\bm{Y}}
\newcommand{\Bvarepsilon}{\bm{\varepsilon}}
\newcommand{\Bxi}{\bm{\xi}}
 \newcommand{\BLambda}{\bm{\Lambda}}
 \newcommand{\BPhi}{\bm{\Phi}}
 \newcommand{\BSigma}{\bm{\Sigma}}
\newcommand{\Bmu}{\bm{\mu}}
\newcommand{\Btheta}{\bm{\theta}}
\newcommand{\BPsi}{\bm{\Psi}}
\colorlet{green}{green!50!black}
\newtheorem{proposition}{Proposition}
\newtheorem{lemma}{Lemma}
\title{High-dimensional Additive Gaussian Processes \\ under Monotonicity Constraints}
\author{%
	Andr\'es F. L\'opez-Lopera \\
	CERAMATHS \\
	Univ. Polytechnique Hauts-de-France \\
	F-59313 Valenciennes, France \\
	\texttt{andres.lopezlopera@uphf.fr}
	\And
	Fran\c{c}ois Bachoc \\
	IMT, UMR5219 CNRS \\
	Universit\'e Paul Sabatier \\
	31062 Toulouse, France \\
	\texttt{francois.bachoc@math.univ-toulouse.fr}
	\And
	Olivier Roustant \\
	IMT, UMR5219 CNRS \\
	INSA \\
	31077 Toulouse c\'edex 4, France \\
	\texttt{roustant@insa-toulouse.fr}
}
\begin{document}
	
	\maketitle
	
	%%%% 
	%All submissions must be in PDF format. Submissions are limited to nine content pages, including all figures and tables, in the NeurIPS “submission” style; additional pages containing only the checklist and references are allowed. You must use the NeurIPS 2021 LaTeX style file. The maximum file size for submissions is 50MB. Submissions that violate the NeurIPS style (e.g., by decreasing margins or font sizes) or page limits may be rejected without further review. Please make sure that your paper prints well. Please consult section 6 in neurips_2021.tex for information regarding fonts. 
	%%%%
	
	\begin{abstract}
		%  The abstract paragraph should be indented \nicefrac{1}{2}~inch (3~picas) on both the left- and right-hand margins. Use 10~point type, with a vertical spacing (leading) of 11~points.  The word \textbf{Abstract} must be centered, bold, and in point size 12. Two line spaces precede the abstract. The abstract must be limited to one paragraph.
		
		We introduce an additive Gaussian process framework accounting for monotonicity constraints and scalable to high dimensions. Our contributions are threefold. First, we show that our framework enables to satisfy the constraints everywhere in the input space. We also show that more general componentwise linear inequality constraints can be handled similarly, such as componentwise convexity. Second, we propose the additive MaxMod algorithm for sequential dimension reduction. By sequentially maximizing a squared-norm criterion, MaxMod identifies the active input dimensions and refines the most important ones. This criterion can be computed explicitly at a linear cost. Finally, we provide open-source codes for our full framework. We demonstrate the performance and scalability of the methodology in several synthetic examples with hundreds of dimensions under monotonicity constraints as well as on a real-world flood application. 
	\end{abstract}

	\section{Introduction} \label{section:intro}
	
	\paragraph{The framework of additive and constrained Gaussian processes.}
	In high dimension, many statistical regression models are based on additive structures of the form:
	\begin{equation}
		y(x_1, \cdots, x_d) = y_1(x_1) + \cdots + y_d(x_d).
		\label{eq:additiveModel}
	\end{equation}
	Although such structures may lead to more ``rigid'' models than non-additive ones, they result in simple frameworks that easily scale in high dimensions~\cite{Hastie1986GAM,Buja1989AdditiveModels}. Generalized additive models (GAMs)~\cite{Hastie1986GAM} and additive Gaussian processes (GPs)~\cite{Durrande2012AdditiveGPs,Duvenaud2011AdditiveGPs} are the most common models in a wide range of applications. The latter can also be seen as a generalization of GAMs that allow uncertainty quantification. As shown in~\cite{Durrande2012AdditiveGPs,Duvenaud2011AdditiveGPs}, additive GPs can significantly improve modeling efficiency and have major advantages for interpretability.
	Furthermore, in non-additive small-dimensional GP models, adding inequality constraints leads to more realistic uncertainty quantification in learning from real data~\cite{LopezLopera2017FiniteGPlinear,Maatouk2017constrGP,LopezLopera2019GPCox,Riihimaki2010GPwithMonotonicity,Golchi2015MonotoneEmulation,Wang2019DiffEq}.
	
	\paragraph{Contributions.}
	Our contributions are threefold. \textbf{1)} We combine the additive and constrained frameworks to propose an additive constrained GP (cGP) prior. Our framework is based on a finite-dimensional representation involving one-dimensional knots for each active variable. The corresponding mode predictor can be computed and posterior realizations can be sampled, both in a scalable way to high dimension.  \textbf{2)} We suggest the additive MaxMod algorithm
	for sequential dimension reduction.
	At each step, MaxMod either adds a new variable to the model or inserts a knot for an active one. This choice is made by maximizing the squared-norm modification of the mode function, for which we supply exact expressions with linear complexity. \textbf{3)} We provide open-source codes for our full framework. We demonstrate the performance and scalability of our methodology with numerical examples in dimension up to 1000 as well as in a real-world application in dimension 37. MaxMod identifies the most important input variables, with data size as low as $n=2d$ in dimension $d$. It also yields accurate models satisfying the constraints everywhere on the input space.
	
	\paragraph{Range of applicability.}
	The computational bottleneck of cGPs is sampling from the posterior distribution. Here, it boils down to sampling a constrained Gaussian vector with dimension equal to the number of knots. This is done with Hamiltonian Monte Carlo (HMC) \cite{Pakman2014Hamiltonian} which currently works with several hundreds of knots. Notice that MaxMod enables to minimize this number of knots.\\
	Our framework is illustrated with monotonicity constraints and can be directly applied to other componentwise constraints such as componentwise convexity. These constraints should be linear and such that satisfying them on the knots is equivalent to satisfying them everywhere (see Section~\ref{sec:contrAdditiveGPs:subsec:approxAdditiveGPs}).
	
	\paragraph{Related literature.}
	Additive GPs have been considered in~\cite{Durrande2012AdditiveGPs,Duvenaud2011AdditiveGPs,Delbridge2020aGPs,Deng2017aGPs,Raskutti2012aGPs}, to name a few.
	The benefit of considering inequality constraints in (non-additive) GPs is demonstrated in~\cite{LopezLopera2017FiniteGPlinear,Maatouk2017constrGP,LopezLopera2019GPCox,Riihimaki2010GPwithMonotonicity,Golchi2015MonotoneEmulation,Wang2019DiffEq} and in application fields such as nuclear safety~\cite{LopezLopera2017FiniteGPlinear}, geostatistics~\cite{Maatouk2017constrGP}, tree distributions~\cite{LopezLopera2019GPCox}, econometrics~\cite{Cousin2016KrigingFinancial}, coastal flooding~\cite{LopezLopera2019lineqGPNoise}, and nuclear physics~\cite{Zhou2019ProtonConstrGPs}. 
	Our cGP model and MaxMod algorithm are extensions of the works in~\cite{LopezLopera2017FiniteGPlinear,Maatouk2017constrGP} and \cite{Bachoc2022MaxMod} (respectively) to the additive case. To the best of our knowledge, our framework is the first that enables to satisfy the constraints everywhere and to scale to high dimension (up to one thousand in our experiments). In particular, the aforementioned constrained works are not applicable in these dimensions.
	
	\paragraph{Paper organization.} Section~\ref{sec:AdditiveGPs} describes the additive GP framework.
	% In Sections~\ref{sec:contrAdditiveGPs} and~\ref{sec:MaxMod}, we discuss our contributions on additive GPs under inequality constraints.
	Sections~\ref{sec:contrAdditiveGPs} and~\ref{sec:MaxMod} present our framework for additive cGPs and introduce the MaxMod algorithm, respectively. Section~\ref{sec:numResults}
	provides the numerical experiments. Appendix~\ref{app:appendix} gathers the technical proofs as well as additional details and numerical results.
	%, we assess the resulting constrained additive GP framework and the MaxMod algorithm under diverse synthetic examples involving tens or hundreds of input variables. We also test our framework in a real-world flood study in the Vienne river, France. 
	
	\section{Framework on additive Gaussian processes}
	\label{sec:AdditiveGPs}
	
	In additive models, GP priors are placed here over the functions $y_1, \ldots, y_d$ in~\eqref{eq:additiveModel}~\cite{Durrande2012AdditiveGPs,Duvenaud2011AdditiveGPs}. For $i = 1, \ldots, d$, let $\{Y_i(x_i); x_i \in \domainDone\}$ be a zero-mean GP with covariance function (or kernel) $k_i$. Taking $Y_1, \ldots, Y_d$ as independent GPs, the process $\{Y(\Bx); \Bx \in \domainDone^d\}$, with $\Bx = (x_1, \ldots, x_d)$, that results from the addition of $Y_1, \ldots, Y_d$, is also a GP and its kernel is given by
	\begin{equation}
		k(\Bx, \Bx')	
		%		= \sum_{p = 1}^{d} k_i(x_i, {x'_i}).
		= k_1(x_1, {x'_1}) + \cdots + k_d(x_d, {x'_d}).
		\label{eq:covaGPY}
	\end{equation}	
	In regression tasks, we often train the GP $Y$ to noisy data $(\Bx_\kappa, y_\kappa)_{1 \leq \kappa \leq n}$. We denote $x_{i}^{(\kappa)}$, for $\kappa = 1, \ldots, n$ and $i = 1, \ldots, d$, the element corresponding to the $i$-th input of $\Bx_\kappa$. By considering additive Gaussian noises $\varepsilon_\kappa \sim \normrnd{0}{\tau^2}$, with $\varepsilon_1, \cdots, \varepsilon_n$ assumed to be independent and independent of $Y$, then the conditional process $Y_n(\Bx) := Y(\Bx) | \{Y(\Bx_1) + \varepsilon_1 = y_1, \ldots, Y(\Bx_n) + \varepsilon_n = y_n \}$ is GP-distributed with mean function and covariance function given by~\cite{Durrande2012AdditiveGPs}
	\begin{align*}
		\mu(\Bx)
		&=
		\Bk^\top(\Bx) [\BK + \tau^2 \BI_n]^{-1} \By_n,
		\\
		c(\Bx, \Bx')
		&= k(\Bx, \Bx') - \Bk^\top(\Bx) [\BK + \tau^2 \BI_n]^{-1} \Bk(\Bx),
		\nonumber
	\end{align*}
	where $\By_n = [y_1, \ldots, y_n]^\top$, $\Bk(\Bx) = \sum_{i=1}^{d} \Bk_i(x_i)$ with $\Bk_i(x_i) = [k_i(x_i, x_{i}^{(1)}), \ldots, k_i(x_i, x_{i}^{(n)})]^\top$, and $\BK = \sum_{i=1}^{d} \BK_i$  with $(\BK_{i})_{\kappa,\ell} = k_i(x_{i}^{(\kappa)},x_{i}^{(\ell)})$ for $1 \leq \kappa,\ell \leq n$. The conditional mean $\mu$ and variance $v(\cdot) = c(\cdot, \cdot)$ are used for predictions and prediction errors, respectively.	
	
	\paragraph{2D illustration.} Figure~\ref{fig:aGP2Dconstr} shows the prediction of an additive GP modeling the function $(x_1, x_2) \mapsto 4(x_1-0.5)^2 + 2 x_2$. The GP is trained with a squared exponential kernel,
	$	k(\Bx, \Bx')	
	= \sum_{i = 1}^{d}
	\sigma_i^2 \exp( -(x_i-{x'_i})^2 / 2\ell_i^2 )$ and with $(x_1,x_2)$: $(0.5, 0)$, $(0.5, 0.5)$, $(0.5, 1)$, $(0, 0.5)$, $(1, 0.5)$.
	The covariance parameters $\Btheta = ((\sigma_1^2, \ell_1), (\sigma_2^2, \ell_2))$, corresponding to the variance and length-scale parameters (respectively), and the noise variance $\tau^2$ are estimated via maximum likelihood~\cite{Rasmussen2005GP}. 
	Although the resulting GP does preserve the additive condition, from Figure~\ref{fig:aGP2Dconstr} (left) we can observe that the quality of the prediction will depend on the availability of data. In our example, we can observe that the GP model does not properly capture the convexity condition along $x_1$. 
	
	\begin{figure}[t!]
		\centering
		\includegraphics[width = 0.32\linewidth]{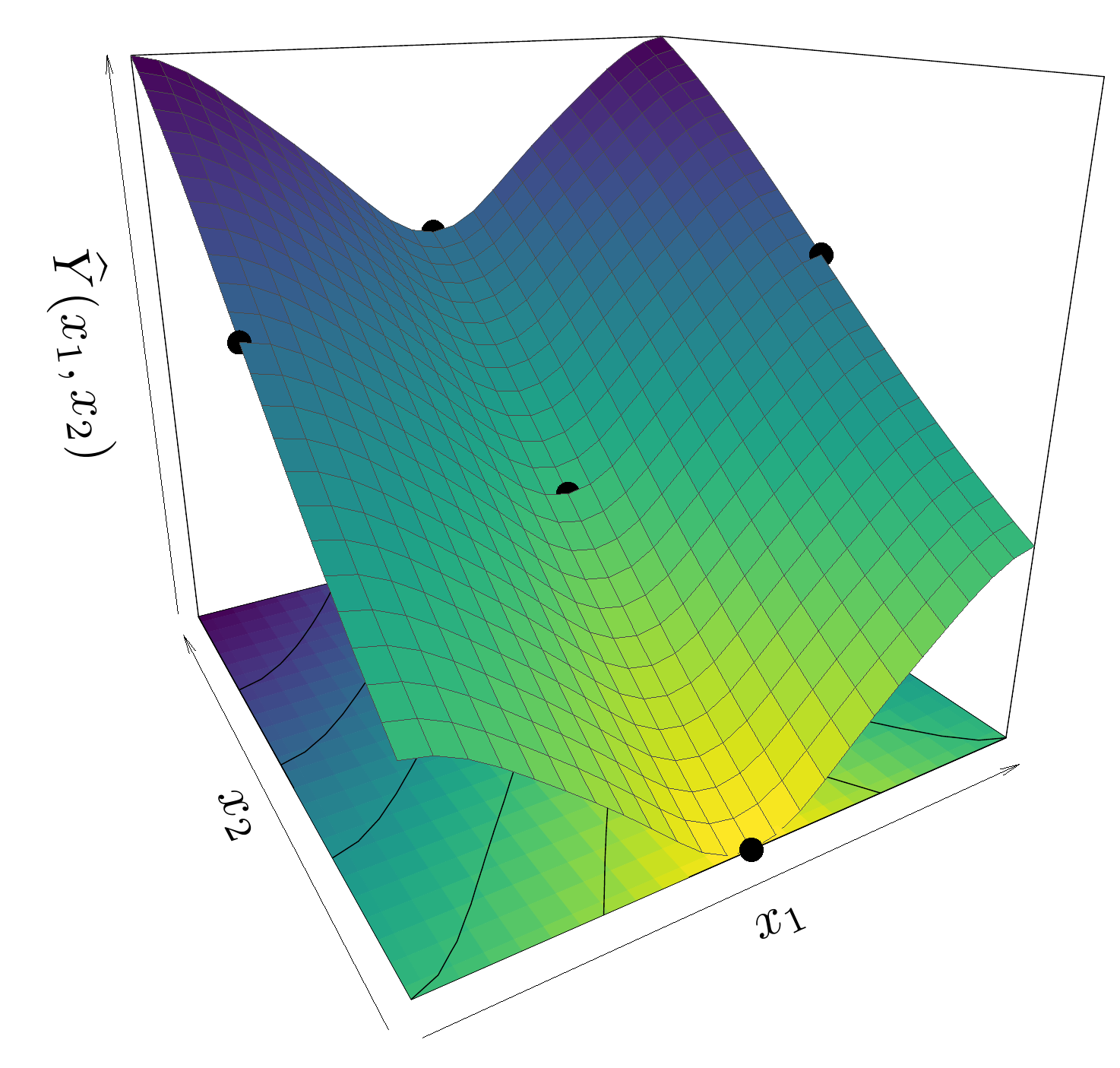}
		\includegraphics[width = 0.32\linewidth]{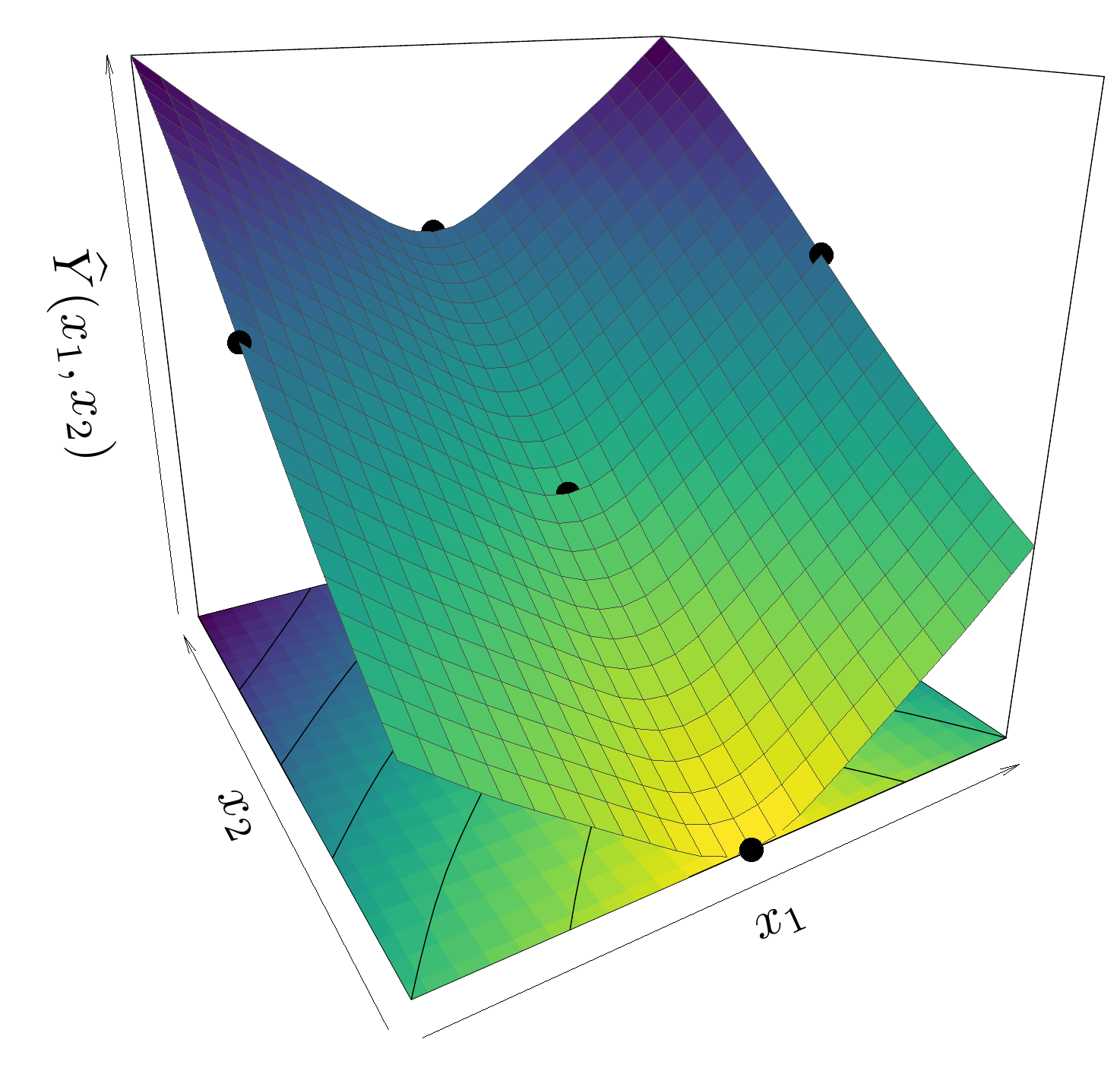}
		\includegraphics[width = 0.32\linewidth]{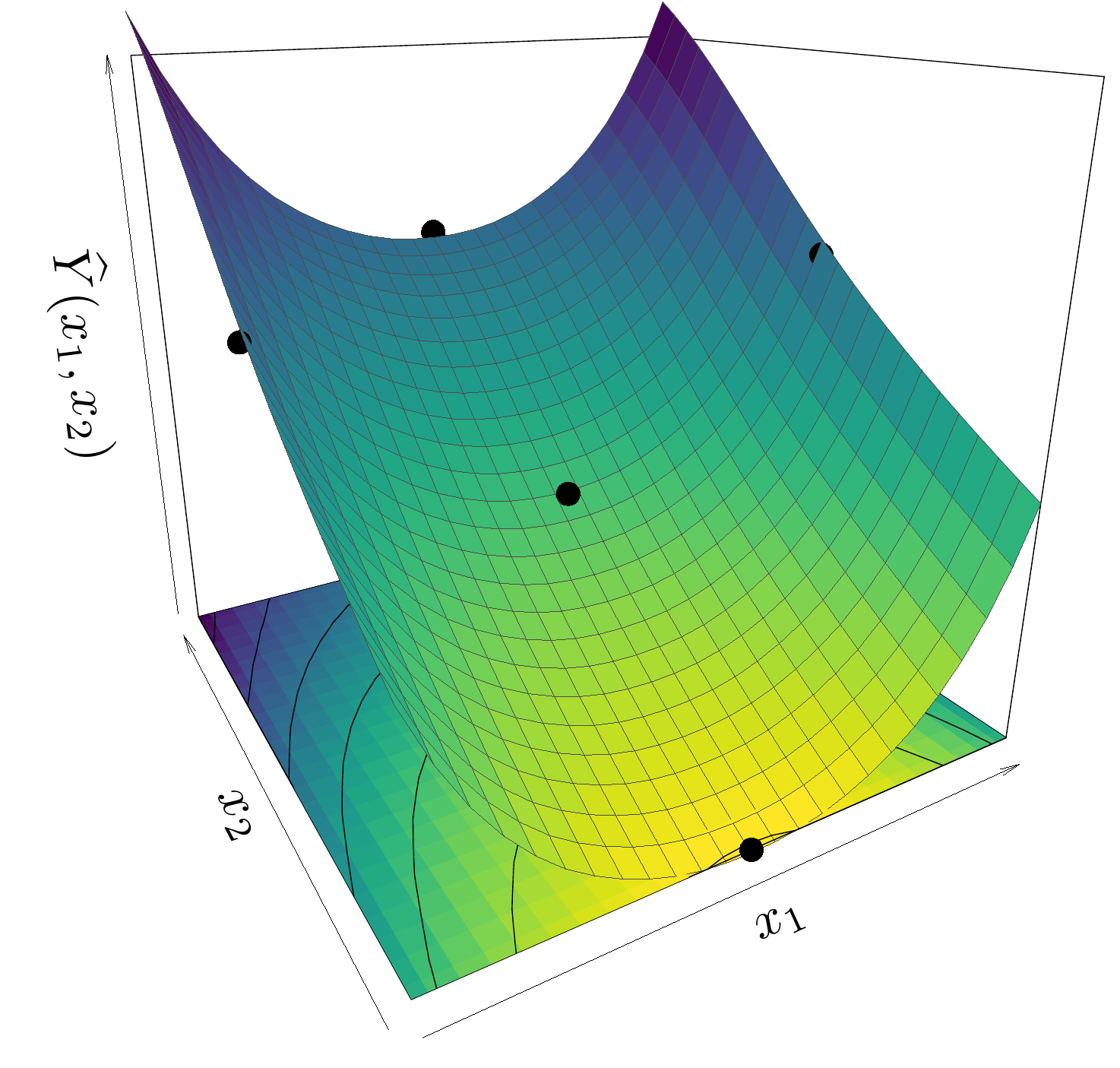}	
		\caption{Additive GP predictions using (left) the unconstrained GP mean, (center) the cGP mode and (right) the cGP mean via HMC (see Section~\ref{sec:contrAdditiveGPs:subsec:MAP}). 
			The constrained model accounts for both  componentwise convexity and monotonicity conditions along $x_1$ and $x_2$, respectively.}
		\label{fig:aGP2Dconstr}
	\end{figure}
	
	\section{Contributions on additive Gaussian processes under inequality constraints}
	\label{sec:contrAdditiveGPs}
	
	\subsection{Finite-dimensional Gaussian process for fixed subdivisions}
	In order to satisfy the constraints everywhere (see the next subsection), we introduce the following finite-dimensional GP. 
	For $i=1,\ldots,d$ we consider a one-dimensional subdivision $\subdiv_i$ (a finite subset of $[0,1]$ composed of knots) with at least two knots at $0$ and $1$. Throughout Section~\ref{sec:contrAdditiveGPs}, $\subdiv_i$ is fixed, but its data-driven selection is studied in Section~\ref{sec:MaxMod}. If the number of knots of $\subdiv_i$ is $m_i$, then the total number of knots is given by $m = m_1 + \dots + m_d$. We let $\subdiv = (\subdiv_1 , \ldots , \subdiv_d)$. The finite-dimensional GP, denoted by $Y_{\subdiv}(\Bx)$, is written, for $\Bx \in \domainDone^d$,
	\begin{equation}
		Y_{\subdiv}(\Bx)
		=
		\sum_{i=1}^d Y_{i,\subdiv_i} (x_i)%,
		\label{eq:finiteApprox}
		%\end{equation}
	%where 
	%\begin{equation}
		%	Y_{i,\subdiv_i} (x_i) 
		= 
		\sum_{i=1}^d
		\sum_{j=1}^{m_i} \xi_{i,j} \phi_{i,j} (x_i),
		%	\label{eq:finiteApprox1D}	
	\end{equation}
	where $ \xi_{i,j} = Y_{i} (t^{(\subdiv_i)}_{(j)})$ with $Y_{1}, \ldots, Y_d$ GPs as in Section~\ref{sec:AdditiveGPs}, and with $0=t^{(\subdiv_i)}_{(1)} < \cdots < t^{(\subdiv_i)}_{(m_i)}=1$ the knots in $\subdiv_i$. We let $t^{(\subdiv_i)}_{(0)}= -1$ and $t^{(\subdiv_i)}_{(m_i+1)} = 2$ by convention, and
	$\phi_{i,j} = \phi_{ t^{(\subdiv_i)}_{(j-1)}, t^{(\subdiv_i)}_{(j)} , t^{(\subdiv_i)}_{(j+1)}} : [0,1] \to \mathbb{R}$ is the hat basis function centered at the knot $t^{(\subdiv_i)}_{(j)}$ of $\subdiv_i$. That is, for $u<v<w$, we let
	\begin{align}
		\label{page:phi:u:v:w}
		\phi_{u,v,w}(t)
		= 
		\begin{cases}
			\frac{1}{v-u}( t-u  )
			& \text{for  $u \leq t \leq v$}, \\
			\frac{1}{w-v}( w-t  )
			& \text{for  $v \leq t \leq w$}, \\
			0 & \text{for $t \not \in [u,w]$}.
		\end{cases}
	\end{align}
	
	Observe from~\eqref{eq:finiteApprox} that, since $\xi_{i,j}$, for $i = 1, \ldots, d$ and $j = 1, \ldots, m_i$, are Gaussian distributed, then $Y_{i,\subdiv_i}$ is a GP with kernel given by
	\begin{equation}
		\widetilde{k}_{i}(x_i, x'_i)
		= \sum_{j=1}^{m_i} \sum_{\kappa=1}^{m_i} \phi_{i,j} (x_i) \phi_{i,\kappa} (x'_i) k_i(t^{(\subdiv_i)}_{(j)}, t^{(\subdiv_i)}_{(\kappa)}).
		\label{eq:kLineqGP}
	\end{equation}
	Moreover, $Y_\subdiv$ is a GP with kernel $\widetilde{k}(\Bx, \Bx') = \sum_{i=1}^d 
	\widetilde{k}_{i}(x_i, x'_i)$.
	
	\subsection{Satisfying inequality constraints}
	\label{sec:contrAdditiveGPs:subsec:approxAdditiveGPs}
	
	We consider the componentwise constraints $Y_{i,\subdiv_i} \in \mathcal{E}_i$, $i=1,\ldots,d$, where $\mathcal{E}_i$ is a one-dimensional function set. In line with~\cite{LopezLopera2017FiniteGPlinear,Maatouk2017constrGP}, we assume that there are convex sets $\mathcal{C}_i \subset \mathbb{R}^{m_i}$ such that
	\begin{align}
		Y_{i, \subdiv_i} \in \ineqsetf_i \; \Leftrightarrow \; \Bxi_{i} \in \mathcal{C}_i
		\label{eq:convexEquiv}
	\end{align}
	where $\Bxi_i = [\xi_{i,1}, \cdots, \xi_{i,m_i}]^\top$.
	Examples of such constraints are monotonicity and  componentwise convexity.
	%Let us develop the monotonicity case, assuming for simplicity that the function of interest is non-decreasing with respect to each input. 
	For instance, for the case where the function $y$ in~\eqref{eq:additiveModel} is non-decreasing with respect to each input, then $\mathcal{E}_i$ is the set of non-decreasing functions on $[0,1]$ and $\mathcal{C}_i$ is given by
	\begin{equation}
		% \mathcal{C}_\uparrow
		\mathcal{C}_i
		=
		\{\boldsymbol{c} \in \realset{m_i}; \forall j = 2, \cdots, m_i: c_{j} - c_{j-1} \geq 0 \}.
		\label{eq:convexsetKnotsMonotone}
	\end{equation}
	Hence in this case, 
	$Y_{\subdiv}$ is monotonic on the entire $[0,1]^d$ if and only if each of its additive component $Y_{i,\subdiv_i}$ is monotonic on $[0,1]$, which happens if and only if the $m_i-1$ constraints~\eqref{eq:convexsetKnotsMonotone} are satisfied. We discuss the example of componentwise convexity in Appendix~\ref{app:appendix:subsec:further}. 
	In general,
	our setting works for any sets of inequality constraints $\mathcal{E}_1 , \ldots, \mathcal{E}_d$ such that~\eqref{eq:convexEquiv} holds where, for $i=1,\ldots,d$,
	$\mathcal{C}_i$ is composed by $q_i$ linear inequalities. We write
	\begin{equation}
		\mathcal{C}_i = \bigg\{\boldsymbol{c} \in \realset{m_i}; \forall \kappa = 1, \cdots, q_i : l^{(i)}_\kappa \leq \sum_{j = 1}^{m} \lambda^{(i)}_{\kappa,j} c_j \leq u^{(i)}_\kappa \bigg\},
		\label{eq:convexLinSetKnots}
	\end{equation}
	where the $\lambda^{(i)}_{\kappa,j}$'s encode the linear operations and the $l^{(i)}_\kappa$'s and $u^{(i)}_\kappa$'s represent the lower and upper bounds. We write the constraints in~\eqref{eq:convexLinSetKnots} as $\Bl_i \leq \BLambda_i \boldsymbol{c} \leq \Bu_i$. %Notice that $\mathcal{C}_{\uparrow}$ 
	%, 
	%and $\mathcal{C}_{\smallsmile}$ %and $\mathcal{C}_\uparrow \cap \mathcal{C}_\smallsmile$
	%are particular cases of~\eqref{eq:convexLinSetKnots}. 
	
	%We consider constraints of the form 
	%\[
	%M_i \xi_i \leq v_i,
	%\]
	%
	%where for $i=1,\ldots,d$, $M_i$ is $B_i \times m_i$ and $v_i $ is $B_i \times 1$, with $B_i \in \mathbb{N}$. For instance if the function is increasing with respect to variable $i$ then we have $B_i = m_i -1$,
	%\[
	%M_i 
	%=
	%\begin{pmatrix}
		%	1 & -1 & 0 & \dots & 0 \\
		%	0 & 1 & -1 & \dots & 0 \\
		%	\ldots & \ldots & \ldots & \ldots & \ldots \\
		%	0 & \ldots & \ldots & 1 & -1 \\
		%\end{pmatrix}
	%\qquad
	%\text{and}
	%\qquad
	%v_i = 
	%\begin{pmatrix}
		%	0 \\
		%	\vdots \\
		%	0
		%\end{pmatrix}.
	%\]
	
	\subsection{Constrained GP predictions}
	\label{sec:contrAdditiveGPs:subsec:MAP}
	We let $\BSigma_{i} = k_i(\subdiv_i, \subdiv_i)$ be the $m_i \times m_i$ covariance matrix of $\Bxi_i$. We consider $\Bx_1, \ldots , \Bx_n \in [0,1]^d$ and write $\BPhi_i$ for the $n \times m_i$ matrix with element $(a,b)$ given by $\phi_{i,b}(\Bx_a)$. Then
	\begin{equation}
		\BY_n := 
		[
		Y_{\subdiv}(\Bx_1),
		\cdots,
		Y_{\subdiv}(\Bx_n) 
		]^\top
		=
		\sum_{i=1}^d 
		\BPhi_i \Bxi_i.
		\label{eq:additiveSumPhiXi}
	\end{equation}
	By considering noisy data $(\Bx_\kappa, y_\kappa)_{1 \leq \kappa \leq n}$, we have the regression conditions $\BY_n + \Bvarepsilon_n = \By_n$, where $\Bvarepsilon_n \sim \mathcal{N}(0 , \tau^2 \BI_n)$ and is independent from $\Bxi_1 , \ldots , \Bxi_d$. Then given the observations and the constraints, the \textit{maximum à posteriori} (MAP) estimate, also called the mode function, is given by
	\begin{equation}
		\widehat{Y}_{\subdiv}(\Bx)
		=
		\sum_{i=1}^d 
		\widehat{Y}_{i,\subdiv_i}(x_i)
		%\]
		%with 
		%\[
		%\widehat{Y}_{i,\subdiv_i} (x_i)
		= 
		\sum_{i=1}^d
		\sum_{j=1}^{m_i} \widehat{\xi}_{i,j} 
		\phi_{i,j} (x_i).
		\label{eq:mode}
	\end{equation}
	The vector $\widehat{\Bxi} = [\widehat{\Bxi}_1^\top , \ldots , \widehat{\Bxi}_d^\top ]^\top$ with $\widehat{\Bxi}_i = [\widehat{\xi}_{i,1} , \ldots , \widehat{\xi}_{i,m_i}]^\top$ is the mode of the Gaussian distribution $\mathcal{N}(\Bmu_c,\BSigma_c)$ of the values at the knots conditionally to the observations and truncated from the constraints $\Bl_1 \leq \BLambda_1 \Bxi_1 \leq \Bu_1, \ldots, \Bl_d \leq \BLambda_d \Bxi_d \leq \Bu_d$ as in~\eqref{eq:convexLinSetKnots}:
	\begin{equation} 
		\widehat{\Bxi} 
		= 
		\underset{ \substack{ 
				\Bc = (\Bc_1^\top , \ldots , \Bc_d^\top)^\top \\ 
				\Bl_i \leq \BLambda_i \Bc_i \leq \Bu_i , i=1,\ldots,d } }{\mathrm{argmin}}
		(\Bc - \Bmu_c)^\top 
		\BSigma_c^{-1} 
		(\Bc - \Bmu_c).
		\label{eq:QuadProb}
	\end{equation}
	Above $\Bmu_c = [\Bmu_{c,1}^\top , \ldots , \Bmu_{c,d}^\top ]^\top$ is the $m \times 1$ vector with block $i$ given by
	\begin{equation*}
		\Bmu_{c,i}
		%	=
		%	\mathrm{Cov}( \Bxi_i , \BY_n ) 
		%	\mathrm{Cov}(\BY_n)^{-1} 
		%	\By_n
		=
		\BSigma_i \BPhi_i^\top 
		\bigg[\bigg(
		\sum_{p=1}^d \BPhi_p \BSigma_p \BPhi_p^\top 
		\bigg)
		+ \tau^2 \BI_n\bigg]^{-1}
		\By_n,
	\end{equation*}
	and $(\BSigma_{c,i,j})_{i,j} $ is the $m \times m $ matrix with block $(i,j)$ given by
	\begin{equation*}
		\BSigma_{c,i,j} 
		%	=
		%	\mathrm{Cov}( \Bxi_i , \Bxi_j ) 
		%	-
		%	\mathrm{Cov}( \Bxi_i , \BY_n ) 
		%	\mathrm{Cov}(\BY_n)^{-1} 
		%	\mathrm{Cov}( \BY_n , \Bxi_j  )
		= 
		\mathbf{1}_{i=j} \BSigma_i
		- 
		\BSigma_i \BPhi_i^\top 
		\bigg[\bigg(
		\sum_{p=1}^d \BPhi_p \BSigma_p \BPhi_p^\top 
		\bigg)
		+ \tau^2 \BI_n\bigg]^{-1}
		\BPhi_j \BSigma_j.
	\end{equation*}
	The expressions for $\Bmu_c$ et $\BSigma_c$ are obtained from the conditional formulas for Gaussian vectors~\cite{Rasmussen2005GP}. In Appendix~\ref{app:Schur}, based on the matrix inversion lemma~\cite{Rasmussen2005GP,Press1992Num}, efficient implementations to speed-up the computation of $\Bmu_c$ et $\BSigma_c$ are given when $m \ll n$. Given $\Bmu_c$, $\BSigma_c$, $\BLambda_1, \ldots, \BLambda_d$, $\Bl_1, \ldots, \Bl_d$ and $\Bu_1, \ldots, \Bu_d$, the optimization problem in~\eqref{eq:QuadProb} is then solved via quadratic programming~\cite{LopezLopera2017FiniteGPlinear,Goldfarb1982QP}. 
	
	The cGP mode in~\eqref{eq:mode} can be used as a point estimate of predictions. Since trajectories of $[\Bxi_1^\top , \ldots , \Bxi_d^\top]^\top$ conditionally on the observations and constraints can be generated via Hamiltonian Monte Carlo (HMC)~\cite{Pakman2014Hamiltonian}, they can be used for uncertainty quantification. Furthermore, the mean of the HMC samples can be used for prediction purposes. 
	Continuing the illustrative 2D example
	in Figure~\ref{fig:aGP2Dconstr},
	we see the improvement brought by the mode function and the mean of conditional simulations, compared to the unconstrained additive GP model. 
	
	\section{Additive MaxMod algorithm}
	\label{sec:MaxMod}
	
	%\andres{Here we extend the MaxMod algorithm of~\cite{Bachoc2022MaxMod} to additive cGPs. The MaxMod algorithm enables to sequentially add input variables or insert knots to active variables.}{To be removed (to save some space) if it is discussed in the introduction!}
	
	\subsection{Squared-norm criterion}
	
	Consider an additive cGP model that uses only a subset $\activeset \subseteq \{1 , \ldots , d\}$ of active variables, with cardinality $|\activeset|$. We write its (vector of) subdivisions as $ \subdiv = (\subdiv_i ; i \in \activeset )$. Its mode function $\widehat{Y}_{\subdiv}$ is defined similarly as in~\eqref{eq:mode}, from $\mathbb{R}^{|\activeset|}$ to $\mathbb{R}$, by, for $\Bx = (x_i ; i \in \activeset)$,
	\begin{equation} \label{eq:current:mode}
		\widehat{Y}_{\subdiv}(\Bx)
		=
		\sum_{i  \in \mathcal{J}} 
		\sum_{j=1}^{m_i}
		\widehat{\xi}_{i,j} 
		\phi_{i,j}
		(x_i).  
	\end{equation}
	Adding a new active variable $i^\star \not \in \activeset$ to $\activeset$, and allocating it the base (minimal) subdivision $\subdiv_{i^\star} = \{0,1\}$ defines a new mode function $\widehat{Y}_{\subdiv,i^{\star}} : \mathbb{R}^{|\activeset|+1} \to \mathbb{R}$. Adding a knot $t \in [0,1]$ to the subdivision $\subdiv_{i^\star}$ for $i^\star \in \activeset$ also defines a new mode function $\widehat{Y}_{\subdiv,i^{\star},t} : \mathbb{R}^{|\activeset|} \to \mathbb{R}$. Since a new variable or knot increases the computational cost of the model (optimization dimension for the mode computation and sampling dimension for generating conditional trajectories via HMC), it is key to quantify its information benefit. We measure this benefit by the squared-norm modification of the cGP mode
	\begin{align}
		I_{\subdiv,i^\star} & =
		\int_{[0,1]^{|\activeset|+1}}
		\left( 
		\widehat{Y}_{\subdiv}(\Bx)
		-
		\widehat{Y}_{\subdiv,i^\star}(\Bx)
		\right)^2 d\Bx
		~ ~
		\text{for}
		~ 
		i^\star \not \in \activeset, 
		\label{eq:maxmod:new:variable}
		\\
		I_{\subdiv,i^\star,t} & =
		\int_{[0,1]^{|\activeset|}}
		\left( 
		\widehat{Y}_{\subdiv}(\Bx)
		-
		\widehat{Y}_{\subdiv,i^\star,t}(\Bx)
		\right)^2 d\Bx
		~ ~
		\text{for}
		~ 
		i^\star \in \activeset,
		\label{eq:maxmod:new:knot}
	\end{align}
	where in the first case we see $\widehat{Y}_{\subdiv}$ as a function of $|\activeset| +1$ variables that does not use variable $i^\star$.

	\subsection{Analytic expressions of the squared-norm criterion}
	\label{sec:MaxMod:subsec:MaxModSolution}
	%We begin by providing the explicit expressions of some moments from hat basis functions. The proof is omitted. Remark that the expression of the second order moments is provided in (SM2.7) in~\cite{Bachoc2022MaxMod}.
	
	\paragraph{Adding a variable.}
	
	For a new variable $i^\star \not \in \activeset$, the new mode function is
	\[
	\widehat{Y}_{\subdiv,i^\star}(\Bx)
	=
	\sum_{i  \in \activeset} 
	\sum_{j=1}^{m_i}
	\widetilde{\xi}_{i,j} 
	\phi_{i,j}(x_i)
	+ \sum_{j=1}^2 
	\widetilde{\xi}_{i^\star,j}
	\phi_{i^\star,j} (x_{i^\star}),
	\]
	where $(\widetilde{\xi}_{i,j})_{i,j}$ and $(\widetilde{\xi}_{i^\star,j})_{j}$
	follow from the optimization problem in dimension $\sum_{i \in \activeset} m_i +2$ corresponding to~\eqref{eq:mode} and~\eqref{eq:QuadProb}. 
	We let $\phi_{i^\star,1}(u) = 1-u$ and $\phi_{i^\star,2}(u) = u$ for $u \in [0,1]$.
	Note that even though, for $i \in \activeset$ and $j =1,\ldots,m_i$, $\widehat{\xi}_{i,j}$ and $\widetilde{\xi}_{i,j}$ correspond to the same basis function $\phi_{i,j}$, they are not equal in general
	as the new mode is reestimated with two more knots, which can modify the coefficients of all the knots. Next, we provide an analytic expression of the integral in~\eqref{eq:maxmod:new:variable}. 
	
	\begin{proposition}[See proof in Appendix~\ref{app:maxmod}] \label{prop:maxmod:new:variable}
		We have
		\begin{equation*}
			I_{\subdiv,i^\star}
			=
			\sum_{i \in \activeset} 
			\sum_{\underset{\vert j - j' \vert \leq 1}{j,j'=1}}^{m_i} 
			\eta_{i,j} 
			\eta_{i,j'}
			E^{(\subdiv_i)}_{j,j'} 
			-
			\sum_{i \in \activeset}
			\bigg( 
			\sum_{j=1}^{m_i} 
			\eta_{i,j}
			E^{(\subdiv_i)}_j 
			\bigg)^2
			\notag
			+
			\frac{\eta_{i^\star}^2}{12}
			+ 
			\bigg( 
			\sum_{i \in \activeset} 
			\sum_{j=1}^{m_i} 
			\eta_{i,j}
			E^{(\subdiv_i)}_j 
			-
			\frac{\zeta_{i^\star}}{2}
			\bigg)^2,
			\notag 
		\end{equation*}
		where $\eta_{i,j} = \widehat{\xi}_{i,j} - \widetilde{\xi}_{i,j}$, $\eta_{i^\star} = \widetilde{\xi}_{i^\star,2} - \widetilde{\xi}_{i^\star,1}$, $\zeta_{i^\star} = \widetilde{\xi}_{i^\star,1} + \widetilde{\xi}_{i^\star,2}$, $E^{(\subdiv_i)}_j :=
		\int_{0}^1
		\phi_{i,j}(t) dt$ and $E^{(\subdiv_i)}_{j,j'} :=\int_{0}^1
		\phi_{i,j}(t)
		\phi_{i,j'}(t) dt $ with explicit expressions in Lemma~\ref{lemma:moments} in Appendix~\ref{app:maxmod}. The matrices $(E^{(\subdiv_i)}_{j,j'})_{1 \leq j, j' \leq m_i}$ are 1-band and the computational cost is linear with respect to $m = \sum_{i \in \activeset} m_i$.
	\end{proposition}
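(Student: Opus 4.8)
The plan is to exploit the tensor-product structure of the integrand so that everything reduces to one-dimensional integrals of hat functions. Write $u = x_{i^\star}$ for the new coordinate and $\Bx = (x_i)_{i\in\activeset}$ for the old ones, and set $\eta_{i,j} := \widehat{\xi}_{i,j} - \widetilde{\xi}_{i,j}$ for $i \in \activeset$. Since $\widehat{Y}_{\subdiv}$, regarded as a function on $[0,1]^{|\activeset|+1}$, does not depend on $u$, the integrand of~\eqref{eq:maxmod:new:variable} splits additively:
\[
\widehat{Y}_{\subdiv}(\Bx) - \widehat{Y}_{\subdiv,i^\star}(\Bx,u) = f(\Bx) + h(u),
\]
with $f(\Bx) = \sum_{i\in\activeset}\sum_{j=1}^{m_i}\eta_{i,j}\,\phi_{i,j}(x_i)$ and $h(u) = -\widetilde{\xi}_{i^\star,1}(1-u) - \widetilde{\xi}_{i^\star,2}\,u$, using $\phi_{i^\star,1}(u)=1-u$ and $\phi_{i^\star,2}(u)=u$. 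Expanding the square and applying Fubini on $[0,1]^{|\activeset|}\times[0,1]$ gives
\[
I_{\subdiv,i^\star} = \int_{[0,1]^{|\activeset|}} f^2\,d\Bx \;+\; 2\Big(\int_{[0,1]^{|\activeset|}} f\,d\Bx\Big)\Big(\int_0^1 h\,du\Big) \;+\; \int_0^1 h^2\,du,
\]
so it suffices to evaluate these three terms (the $\widehat{\xi}$'s and $\widetilde{\xi}$'s are fixed numbers, so each term is a pure integral of piecewise-linear functions).

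The two $h$-terms are elementary: integrating the affine function $h$ over $[0,1]$ gives $\int_0^1 h\,du = -\tfrac12(\widetilde{\xi}_{i^\star,1}+\widetilde{\xi}_{i^\star,2}) = -\zeta_{i^\star}/2$, and, after writing $\widetilde{\xi}_{i^\star,1}\widetilde{\xi}_{i^\star,2} = (\zeta_{i^\star}^2-\eta_{i^\star}^2)/4$, one gets $\int_0^1 h^2\,du = \zeta_{i^\star}^2/4 + \eta_{i^\star}^2/12$. For the $f$-term I would write $f = \sum_{i\in\activeset} f_i$ with $f_i(x_i) = \sum_j \eta_{i,j}\phi_{i,j}(x_i)$ and expand $f^2 = \sum_i f_i^2 + \sum_{i\neq i'} f_i f_{i'}$. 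Integrating over $[0,1]^{|\activeset|}$, the diagonal pieces give $\int_0^1 f_i^2 = \sum_{j,j'}\eta_{i,j}\eta_{i,j'}E^{(\subdiv_i)}_{j,j'}$, while each off-diagonal piece factorizes into one-dimensional integrals, $\int f_i f_{i'} = A_i A_{i'}$ with $A_i := \int_0^1 f_i = \sum_j \eta_{i,j}E^{(\subdiv_i)}_j$. The identity $\sum_{i\neq i'}A_i A_{i'} = \big(\sum_i A_i\big)^2 - \sum_i A_i^2$ turns the $f$-term into $\sum_{i}\sum_{j,j'}\eta_{i,j}\eta_{i,j'}E^{(\subdiv_i)}_{j,j'} - \sum_i A_i^2 + \big(\sum_i A_i\big)^2$.

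It remains to assemble the pieces. The mixed term equals $2\big(\sum_i A_i\big)(-\zeta_{i^\star}/2) = -\zeta_{i^\star}\sum_i A_i$; adding it to the $\big(\sum_i A_i\big)^2$ coming from the $f$-term and the $\zeta_{i^\star}^2/4$ coming from $\int h^2$ completes the square to $\big(\sum_i A_i - \zeta_{i^\star}/2\big)^2$, while the leftover $\eta_{i^\star}^2/12$ stays as is; substituting $A_i = \sum_j \eta_{i,j}E^{(\subdiv_i)}_j$ back yields exactly the stated formula. For the complexity claim, note that $\phi_{i,j}$ and $\phi_{i,j'}$ have overlapping supports only when $|j-j'|\leq 1$, hence $E^{(\subdiv_i)}_{j,j'}=0$ for $|j-j'|\geq 2$: the matrix $(E^{(\subdiv_i)}_{j,j'})_{j,j'}$ is tridiagonal, the first double sum has $O(m_i)$ nonzero terms, and summing over $i\in\activeset$ gives linear cost $O(m)$, the remaining sums being trivially $O(m)$. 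The closed forms of $E^{(\subdiv_i)}_j$ and of the three diagonals of $E^{(\subdiv_i)}_{j,j'}$ are obtained by integrating piecewise-linear functions and are recorded in Lemma~\ref{lemma:moments}. I do not expect a genuine obstacle here: the argument is separation of variables plus bookkeeping, the only delicate point being to get the signs right in the completion of the square that merges the $A$-cross-sum, the mixed $f,h$ term, and the $\zeta_{i^\star}^2/4$ from $\int h^2$.
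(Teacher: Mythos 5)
Your proof is correct and is essentially the paper's argument: both reduce the integral to the one-dimensional moments $E^{(\subdiv_i)}_j$ and $E^{(\subdiv_i)}_{j,j'}$ of Lemma~\ref{lemma:moments} via separation of variables, the only difference being presentational — the paper treats the inputs as independent uniform random variables and uses $\mathbb{E}(Z^2)=\mathrm{Var}(Z)+(\mathbb{E}Z)^2$ with the additive variance decomposition (including $\mathrm{Var}(\widetilde{\xi}_{i^\star,1}(1-X)+\widetilde{\xi}_{i^\star,2}X)=\eta_{i^\star}^2/12$ and $\mathbb{E}(\phi_{i^\star,j}(X_{i^\star}))=1/2$), whereas you expand $(f+h)^2$ with Fubini and recover the same structure by completing the square. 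Your intermediate computations (the values of $\int h$, $\int h^2$, the cross-term identity, and the tridiagonality giving linear cost) all check out.
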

	
	%\begin{proposition}[See proof in Appendix~\ref{app:maxmod}] \label{prop:maxmod:new:variable}
		%We have
		%\begin{align*}
			%I_{\subdiv,i^\star}
			%	= & 
			%	\sum_{i \in \activeset} 
			%	\sum_{\underset{\vert j - j' \vert \leq 1}{j,j'=1}}^{m_i} 
			%	(\widehat{\xi}_{i,j} - \widetilde{\xi}_{i,j}) 
			%	(\widehat{\xi}_{i,j'} - \widetilde{\xi}_{i,j'}) 
			%	E^{(\subdiv_i)}_{j,j'} 
			%	-
			%	\sum_{i \in \activeset}
			%	\bigg( 
			%	\sum_{j=1}^{m_i} 
			%	(\widehat{\xi}_{i,j} - \widetilde{\xi}_{i,j}) 
			%	E^{(\subdiv_i)}_j 
			%	\bigg)^2
			%	\\
			%	\notag
			%	&  +
			%	\frac{(\widetilde{\xi}_{i^\star,2} - \widetilde{\xi}_{i^\star,1})^2}{12}
			%	+ 
			%	\bigg( 
			%	\sum_{i \in \activeset} 
			%	\sum_{j=1}^{m_i} 
			%	(\widehat{\xi}_{i,j} - \widetilde{\xi}_{i,j}) E^{(\subdiv_i)}_j 
			%	-
			%	\frac{\widetilde{\xi}_{i^\star, 1} + \widetilde{\xi}_{i^\star, 2}}{2}
			%	\bigg)^2,
			%	\notag 
			%\end{align*}
		%with $E^{(\subdiv_i)}_j :=
		%\int_{0}^1
		%\phi_j(t) dt$ and $E^{(\subdiv_i)}_{j,j'} :=\int_{0}^1
		%\phi_j(t)
		%\phi_{j'}(t) dt $ with explicit expressions in Lemma~\ref{lemma:moments} in Appendix~\ref{app:maxmod}. The matrices $(E^{(\subdiv_i)}_{j,j'})_{1 \leq j, j' \leq m_i}$ are 1-band and the computational cost is linear with respect to $m = \sum_{i \in \activeset} m_i$.
		%\end{proposition}

	\paragraph{Inserting a knot to an active variable.}
	
	For a new $t$ added to $\subdiv_{i^\star}$ with $i^\star  \in \activeset$, the new mode function is
	\[
	\widehat{Y}_{\subdiv,i^\star,t}(\Bx)
	=
	\sum_{i  \in \activeset} 
	\sum_{j=1}^{\widetilde{m}_i}
	\widetilde{\xi}_{i,j} 
	\widetilde{\phi}_{i,j}(x_i),
	\]
	where $\widetilde{m}_i = m_i$ for $i \neq i^\star$, $\widetilde{m}_{i^\star} = m_{i^\star}+1$,
	$\widetilde{\phi}_{i,j} = \phi_{i,j}$ for $i \neq i^\star$, and $\widetilde{\phi}_{i^\star,j}$ is obtained from $\subdiv_{i^\star} \cup \{t\}$ as in Lemma~\ref{lemma:moments}.  As before,
	this follows from the optimization problem in dimension $\sum_{i \in \activeset} m_i +1$ corresponding to~\eqref{eq:mode} and~\eqref{eq:QuadProb}. 
	Next, we provide an analytic expression of~\eqref{eq:maxmod:new:knot}. 
	
	\begin{proposition}[See proof in Appendix~\ref{app:maxmod}] \label{prop:maxmod:new:knot}
		For $i \in \activeset \backslash \{i^\star\}$, let $\widetilde{\subdiv}_i = \subdiv_i$. Let $\widetilde{\subdiv}_{i^\star} = \subdiv_{i^\star} \cup \{t\}$. 
		Recall that the knots in $\subdiv_{i^\star}$ are written $0=t^{(\subdiv_{i^\star})}_{( 1)}
		< \dots <	t^{(\subdiv_{i^\star})}_{( m_{i^\star})}=1$.
		Let $\nu \in \{ 1 , \ldots , m_{i^\star} - 1 \}$ be such that $t^{(\subdiv_{i^\star})}_{( \nu )} < t < t^{(\subdiv_{i^\star})}_{( \nu + 1 )}$. Then we have
		\begin{equation*}
			I_{\subdiv,i^\star,t}
			=
			\sum_{i \in \activeset}  
			\sum_{\underset{\vert j - j' \vert \leq 1}{j,j'=1}}^{\widetilde{m}_i} 
			\bar{\eta}_{i,j} 
			\bar{\eta}_{i,j'}
			E^{(\widetilde{\subdiv}_i)}_{j,j'} 
			-
			\sum_{i \in \activeset} 
			\bigg( 
			\sum_{j=1}^{\widetilde{m}_i} 
			\bar{\eta}_{i,j}
			E^{(\widetilde{\subdiv}_i)}_j 
			\bigg)^2
			\notag
			+ 
			\bigg( 
			\sum_{i \in \activeset}  
			\sum_{j=1}^{\widetilde{m}_i} 
			\bar{\eta}_{i,j}
			E^{(\widetilde{\subdiv}_i)}_j 
			\bigg)^2,
		\end{equation*}
		where $\bar{\eta}_{i,j} = \bar{\xi}_{i,j} - \widetilde{\xi}_{i,j}$, $E^{(\widetilde{\subdiv}_i)}_j $ and $E^{(\widetilde{\subdiv}_i)}_{j,j'}$ are as in Proposition~\ref{prop:maxmod:new:variable}, $\bar{\xi}_{i,j} = \widehat{\xi}_{i,j}$ for $i \neq i^\star$, $\bar{\xi}_{i^\star,j} = \widehat{\xi}_{i^\star,j}$ for $j \leq \nu$,  $\bar{\xi}_{i^\star,j} = \widehat{\xi}_{i^\star,j-1}$ for $j \geq \nu+2$, and 
		\begin{equation*}
			\bar{\xi}_{i^\star,\nu+1}= 
			\widehat{\xi}_{i^\star,\nu}
			\frac{ t^{(\subdiv_{i^\star})}_{(\nu+1)} - t }{t^{(\subdiv_{i^\star})}_{(\nu+1)} - t^{(\subdiv_{i^\star})}_{(\nu)}}
			+
			\widehat{\xi}_{i^\star,\nu+1}
			\frac{ t - t^{(\subdiv_{i^\star})}_{(\nu)}  }{t^{(\subdiv_{i^\star})}_{(\nu+1)} - t^{(\subdiv_{i^\star})}_{(\nu)}}.
		\end{equation*}
		The computational cost is linear with respect to $m = \sum_{i \in \activeset} m_i$.
	\end{proposition}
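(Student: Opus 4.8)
The plan is to reduce $I_{\subdiv,i^\star,t}$ to the squared $L^2$-norm of an additive, piecewise-linear function, exploiting the fact that refining a one-dimensional subdivision only enlarges the span of the associated hat functions. First I would record the nested-subdivision identity: since $\widetilde{\subdiv}_{i^\star} = \subdiv_{i^\star} \cup \{t\}$ refines $\subdiv_{i^\star}$, every continuous function that is affine on each subinterval determined by $\subdiv_{i^\star}$ is also affine on each subinterval determined by $\widetilde{\subdiv}_{i^\star}$. In particular the additive component $\widehat{Y}_{i^\star,\subdiv_{i^\star}} = \sum_{j=1}^{m_{i^\star}} \widehat{\xi}_{i^\star,j}\phi_{i^\star,j}$ lies in the span of $\{\widetilde{\phi}_{i^\star,j}\}_{j=1}^{\widetilde{m}_{i^\star}}$; and since $\widetilde{\phi}_{i^\star,j}$ equals $1$ at the $j$-th knot of $\widetilde{\subdiv}_{i^\star}$ and $0$ at the others, the coefficients of $\widehat{Y}_{i^\star,\subdiv_{i^\star}}$ in this refined basis are exactly its values at the knots of $\widetilde{\subdiv}_{i^\star}$. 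For knots already in $\subdiv_{i^\star}$ these values are the $\widehat{\xi}_{i^\star,j}$ up to the index shift ($j \le \nu$ unchanged, $j \ge \nu+2$ shifted by one), and at $t \in (t^{(\subdiv_{i^\star})}_{(\nu)}, t^{(\subdiv_{i^\star})}_{(\nu+1)})$ the value is the linear interpolation of $\widehat{\xi}_{i^\star,\nu}$ and $\widehat{\xi}_{i^\star,\nu+1}$ displayed in the statement; for $i \in \activeset\setminus\{i^\star\}$ nothing changes. This is precisely the definition of $\bar{\xi}_{i,j}$, so $\widehat{Y}_{\subdiv}(\Bx) = \sum_{i\in\activeset}\sum_{j=1}^{\widetilde{m}_i}\bar{\xi}_{i,j}\widetilde{\phi}_{i,j}(x_i)$ on $[0,1]^{|\activeset|}$.

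Both mode functions being now expressed in the common refined basis, their difference is the additive function
\[
\widehat{Y}_{\subdiv}(\Bx) - \widehat{Y}_{\subdiv,i^\star,t}(\Bx) = \sum_{i\in\activeset} g_i(x_i), \qquad g_i(x_i) := \sum_{j=1}^{\widetilde{m}_i} \bar{\eta}_{i,j}\widetilde{\phi}_{i,j}(x_i),
\]
with $\bar{\eta}_{i,j} = \bar{\xi}_{i,j} - \widetilde{\xi}_{i,j}$. I would then expand $\big(\sum_{i}g_i\big)^2$ and integrate termwise over the product domain $[0,1]^{|\activeset|}$: since each $g_i$ depends on $x_i$ only and $[0,1]$ has unit length, a diagonal term contributes $\int_0^1 g_i^2$ and a cross term $i\ne i'$ factorizes into $\big(\int_0^1 g_i\big)\big(\int_0^1 g_{i'}\big)$. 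Hence $I_{\subdiv,i^\star,t} = \sum_i A_i + \sum_{i\ne i'} B_iB_{i'}$ with $A_i = \int_0^1 g_i^2$ and $B_i = \int_0^1 g_i$, and rewriting $\sum_{i\ne i'}B_iB_{i'} = \big(\sum_i B_i\big)^2 - \sum_i B_i^2$ already produces the three-term shape of the claimed identity.

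It remains to evaluate the one-dimensional integrals. Expanding $g_i^2$ and using that $\widetilde{\phi}_{i,j}\widetilde{\phi}_{i,j'}\equiv 0$ whenever $|j-j'|\ge 2$ (disjoint supports) gives $A_i = \sum_{|j-j'|\le 1}\bar{\eta}_{i,j}\bar{\eta}_{i,j'}E^{(\widetilde{\subdiv}_i)}_{j,j'}$, and linearity gives $B_i = \sum_j \bar{\eta}_{i,j}E^{(\widetilde{\subdiv}_i)}_j$, where $E^{(\widetilde{\subdiv}_i)}_{j,j'}$ and $E^{(\widetilde{\subdiv}_i)}_j$ are the hat-function integrals of Lemma~\ref{lemma:moments}. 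Substituting into the previous display yields the stated formula. The $1$-band structure of $(E^{(\widetilde{\subdiv}_i)}_{j,j'})_{j,j'}$ makes the evaluation of each $A_i$ and $B_i$ cost $O(\widetilde{m}_i)$, hence $O(m)$ overall.

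The only point that is not pure bookkeeping is the re-expression step: identifying the coefficients $\bar{\xi}_{i^\star,j}$ of the old mode function in the refined hat basis. Once this nested-subdivision identity is established, the argument parallels that of Proposition~\ref{prop:maxmod:new:variable}, but is in fact simpler, since here no new input dimension is created and therefore none of the isolated $\eta_{i^\star}$, $\zeta_{i^\star}$ terms of that proposition appear.
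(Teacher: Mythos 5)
Your proof is correct and follows essentially the same route as the paper's: re-express the old mode in the refined hat basis (the interpolation identity you derive directly is exactly what the paper imports from Proposition SM2.1 of \cite{Bachoc2022MaxMod}), then compute the squared $L^2$ norm of the additive difference using the moments $E^{(\widetilde{\subdiv}_i)}_j$ and $E^{(\widetilde{\subdiv}_i)}_{j,j'}$ of Lemma~\ref{lemma:moments}, with the band structure giving linear cost. The only cosmetic difference is that you expand and integrate the square deterministically over the product domain, whereas the paper phrases the identical algebra probabilistically as a variance-plus-squared-mean decomposition for independent uniform inputs.
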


	\begin{algorithm}[t!] 
		\caption{MaxMod for additive cGPs}\label{alg:iterative2}
		\begin{algorithmic}[1]
			\BStatex  \textbf{Input parameters:} $\Delta>0$, $\Delta' >0$, $d$.
			\BState Initialize MaxMod with the dimension in which the mode $\widehat{Y}_{\text{MaxMod},0}$ maximizes the squared-norm.
			\BStatex \textbf{Sequential procedure:} For $m \in \mathbb{N}$,  $m \geq 0$, do the following.
			\For{$i = 1, \ldots, d$} 
			\If {the variable $i$ is already active} compute the optimal position of the new knot $t_{i} \in [0,1]$ that maximizes $I_{\subdiv,i,t} + \Delta D(t , \subdiv_i)$ over $t \in [0,1]$, with $I_{\subdiv,i,t}$ as in~\eqref{eq:maxmod:new:knot}. Denote the resulting mode as $\widehat{Y}_{\text{MaxMod},m+1}^{(i)}$ and the resulting value of $I_{\subdiv,i,t}$ as $I_{i}$.
			\Else~{add two knots at the boundaries of the selected new active dimension, i.e. $(t_{i,1}, t_{i,2}) = (0,1)$, and denote the resulting mode as $\widehat{Y}_{\text{MaxMod},m+1}^{(i)}$ and the resulting value of~\eqref{eq:maxmod:new:variable} as $I_{i}$.}
			\EndIf
			\EndFor
			\BState Choose the optimal decision $i^{\star} \in \{1, \ldots, D\}$ that maximizes the MaxMod criterion:
			\begin{equation*}
				i^\star \in \mathrm{argmax}_{i \in \{1, \ldots, d\}}
				\left( 
				I_i
				+
				\Delta \mathbf{1}_{i \in \activeset}
				D(t_i , \subdiv_i)
				+ \Delta'  \mathbf{1}_{i \not \in \activeset}
				\right). 
			\end{equation*}
			\BState Update knots and active variables and set new mode to $\widehat{Y}_{\text{MaxMod},m+1} = \widehat{Y}_{\text{MaxMod},m+1}^{(i^\star)}$. 
		\end{algorithmic}
	\end{algorithm}
	\subsection{The MaxMod algorithm}
	\label{sec:MaxMod:subsec:alg}
	Algorithm~\ref{alg:iterative2} summarizes the routine of MaxMod. When considering inserting a knot $t$ to an active variable $i$, we promote space filling by adding a reward of the form $\Delta D(t , \subdiv_i)$, where $\subdiv_i$ is the current subdivision and where $D(t,\subdiv_i)$ is the smallest distance from $t$ to an element of $\subdiv_i$. When adding a new variable, we add a fixed reward $\Delta'$. Both $\Delta$ and $\Delta'$ are tuning parameters of the algorithm and allow to promote adding new variables over refining existing ones with new knots, or conversely. Step 3 in Algorithm~\ref{alg:iterative2} is performed by a grid search, and involves multiple optimizations for computing the new modes in~\eqref{eq:QuadProb}, followed by applications of~\eqref{eq:maxmod:new:knot}. Step 4 yields a single optimization for computing the new mode in~\eqref{eq:QuadProb}, followed by an application of~\eqref{eq:maxmod:new:variable}. For each computation of a new mode, the covariance parameters of the kernels $k_i$, for $i  \in \activeset$, are estimated. For faster implementations, the covariance parameters can be fixed throughout each pass corresponding to a fixed value of $m$ (Steps 2 to 6) or can be re-estimated every $T$ values of $m$, for some period $T \in \mathbb{N}$. Furthermore, steps 3 and 4 can be parallelized and computed in different clusters
	
	%\andres{The covariance parameters of the kernels $k_1 , \ldots , k_d$ are kept fixed throughout each pass corresponding to a fixed value of $m$ (Steps 2 to 6). They can be re-estimated for each new value of $m$, or every $N$ values of $m$, for some period $N \in \mathbb{N}$.}{(to be discussed)} 
	
	\section{Numerical experiments}
	\label{sec:numResults}
	
	Implementations of the additive cGP framework are based on the R package \texttt{lineqGPR}~\cite{LopezLopera2018LineqGPR}. Experiments throughout this section are executed on an 11th Gen Intel(R) Core(TM) i5-1145G7 \@ 2.60GHz 1.50 GHz, 16 Gb RAM. Both R codes and notebooks to reproduce some of the numerical results are available in the Github repository: \url{https://github.com/anfelopera/lineqGPR}. \\
	As training data, we use random Latin hypercube designs (LHDs). The benefits of LH sampling for additive functions has been investigated, e.g. in \cite{Stein1987LHS}. 
	As we consider scarce data, we often choose a minimal design size $n = 2d$, corresponding to the number of parameters of the chosen additive GP kernel: an additive Mat\'ern 5/2 kernel with one variance parameter $\sigma_i^2$ and one characteristic-length parameter $\ell_i$ per dimension 
	\cite{Durrande2012AdditiveGPs}. 
	We denote $\Btheta = ((\sigma_1^2, \ell_1), \cdots, (\sigma_d^2, \ell_d))$. For other applications where $n \in [d+1, 2d [$, we may consider the kernel structure in~\cite{Duvenaud2011AdditiveGPs} with only one global variance parameter, at the cost of more restricted cGP models.
	Finally, in the examples that require a comparison with non-additive GP models, we use common GP settings: maximin LHDs and $n=10d$. The larger value of $n$ accounts for the fact that additivity is not a prior information of standard GPs.\\
	%As discussed in~\cite{Stein1987LHS}, since the use of training LHDs helps more when additive functions are considered, we believe that $n = 2d$ is enough for having accurate predictions}.\\
We compare the quality of the GP predictions in terms of the $Q^2$ criterion on unobserved data. It is defined as $Q^2 = 1 - \operatorname{SMSE}$, where $\operatorname{SMSE}$ is the standardized mean squared error~\cite{Rasmussen2005GP}. For noise-free data, $Q^2$ is equal to one if predictions are exactly equal to the test data and lower than one otherwise.

\subsection{Additive Gaussian processes with monotonicity constraints}
\begin{figure*}[t!]
	\centering
	%	\subfigure[$y(x_1, x_2, 0, 0, 0)$]{
		\includegraphics[width=0.23\textwidth]{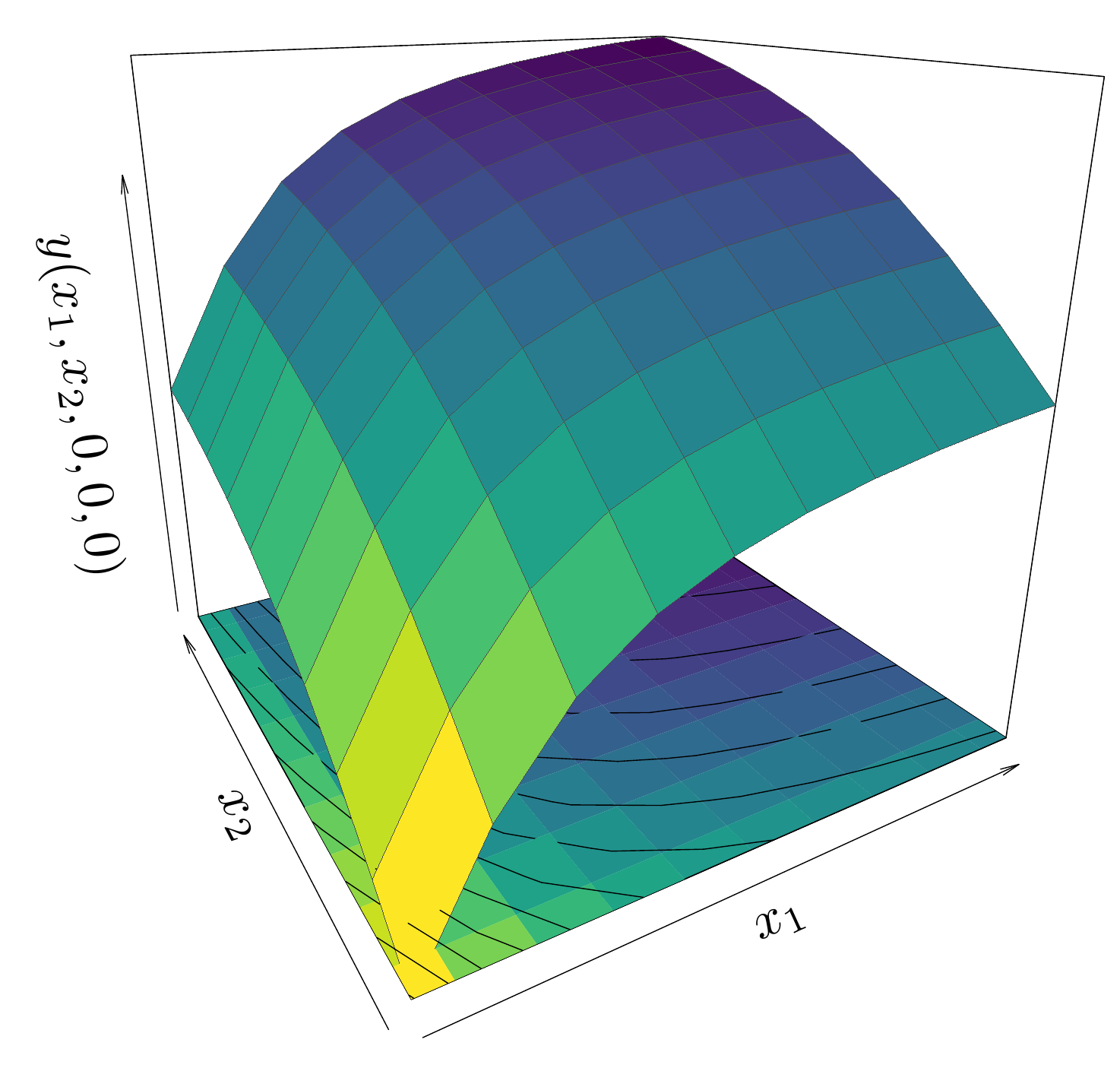}
		%	}
	%	\subfigure[$y(x_1, 0, x_3, 0, 0)$]{
		\includegraphics[width=0.23\textwidth]{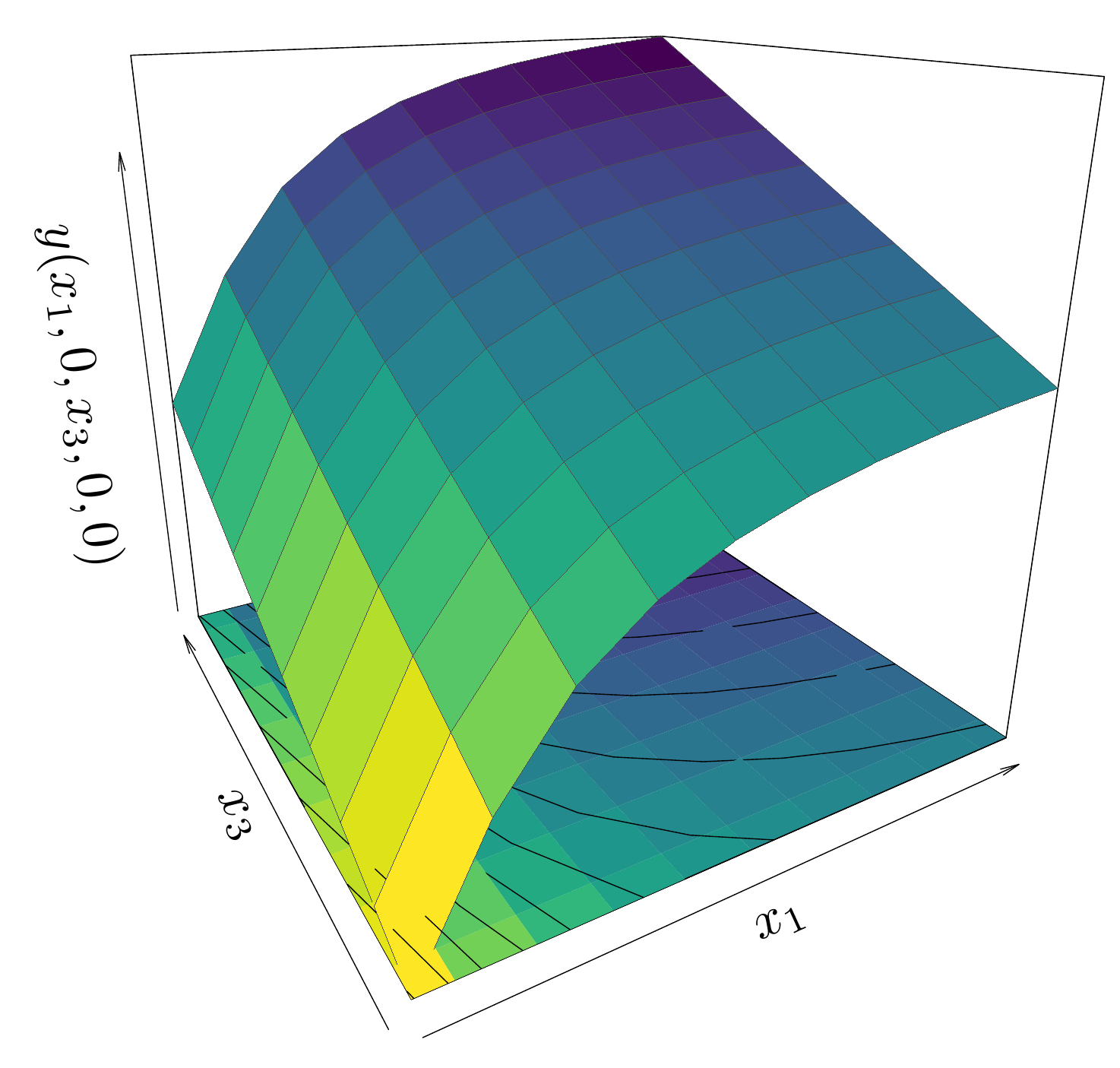}
		%	}
	%	\subfigure[$y(x_1, 0, 0, x_4, 0)$]{
		\includegraphics[width=0.23\textwidth]{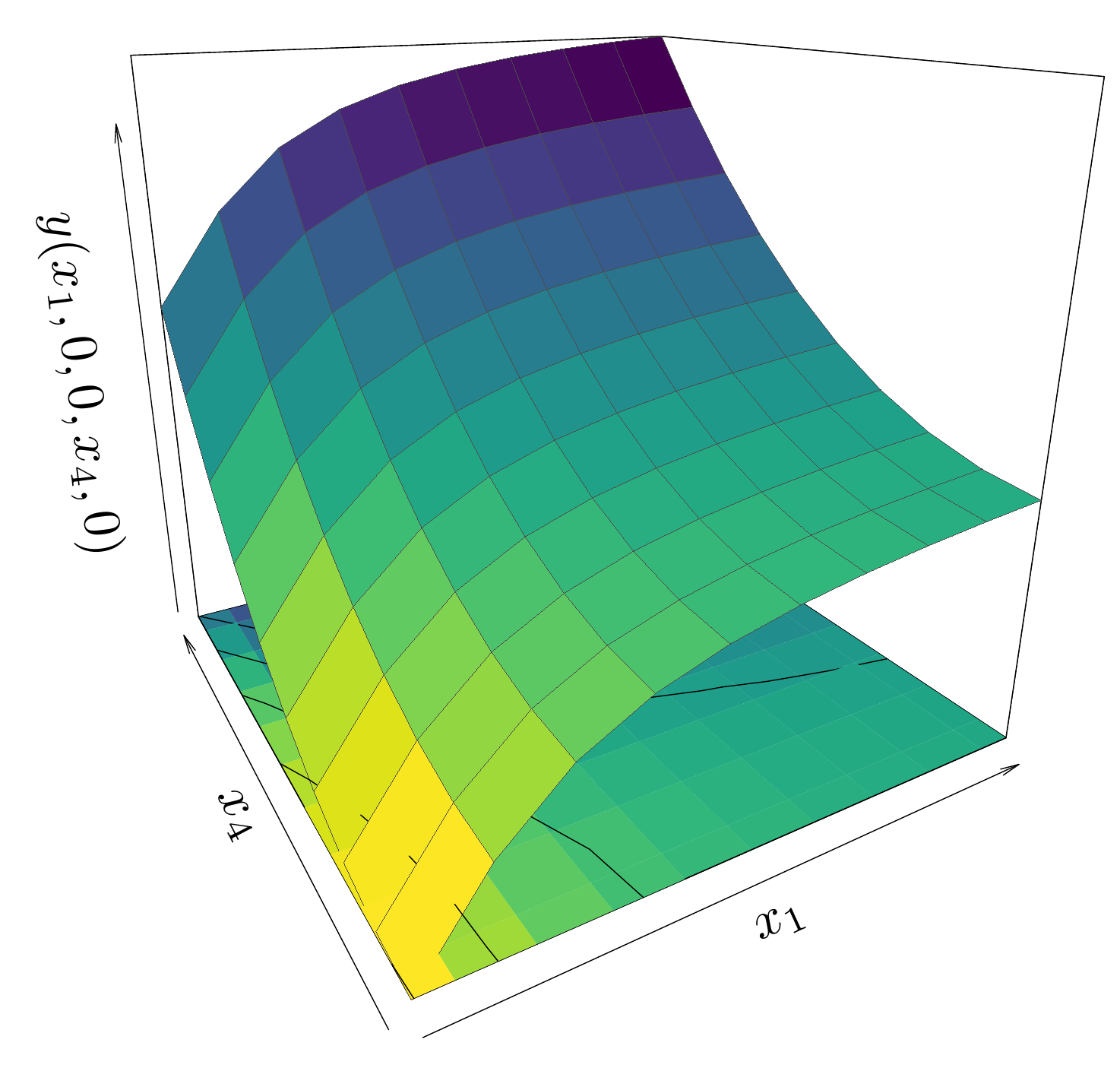}
		%	}
	%	\subfigure[$y(x_1, 0, 0, 0, x_5)$]{
		\includegraphics[width=0.23\textwidth]{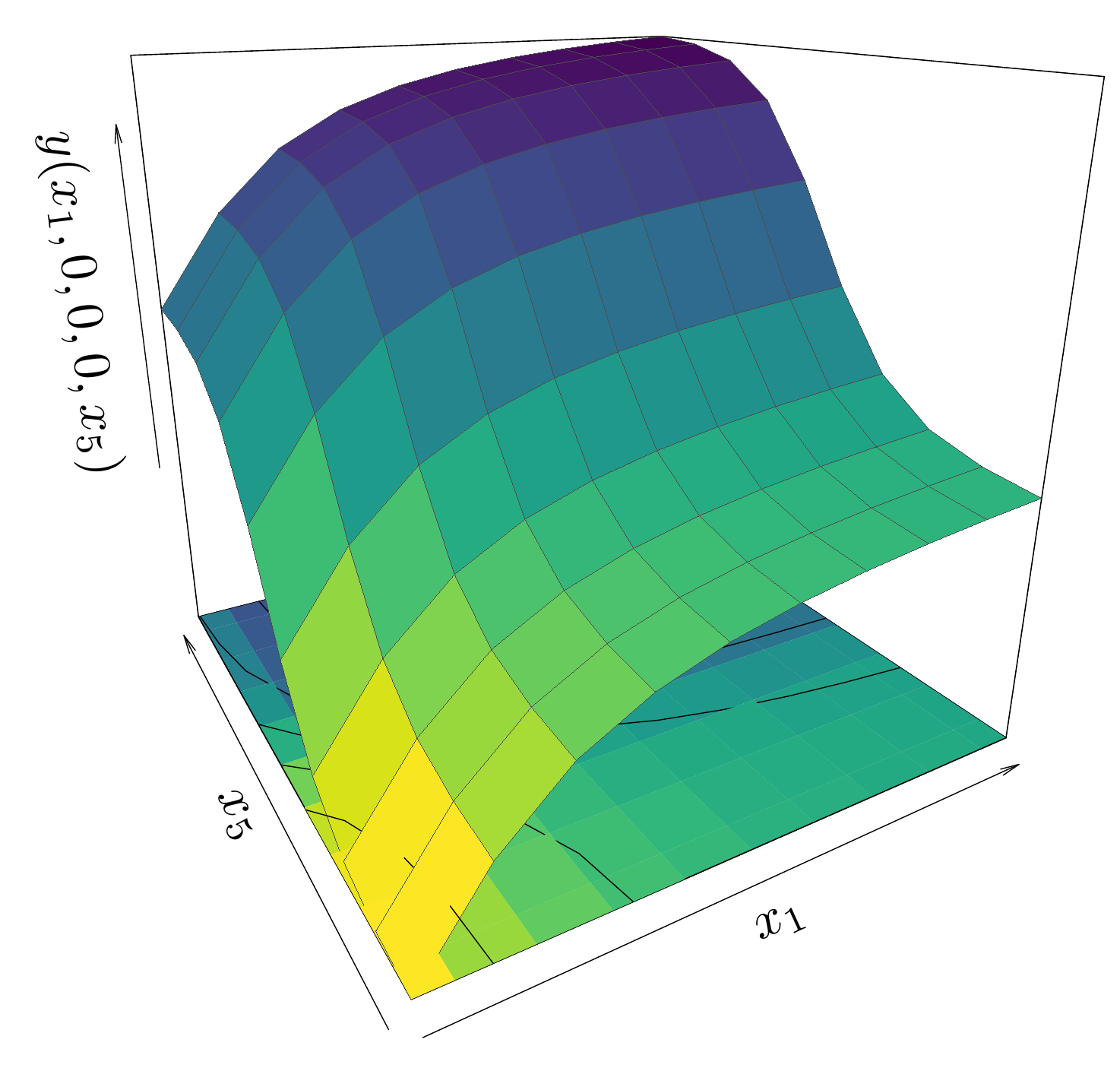}
		%	}
	
	%	\vspace{-1ex}
	%	\subfigure[$\widehat{Y}_m(x_1, x_2, 0, 0, 0)$]{
		\includegraphics[width=0.23\textwidth]{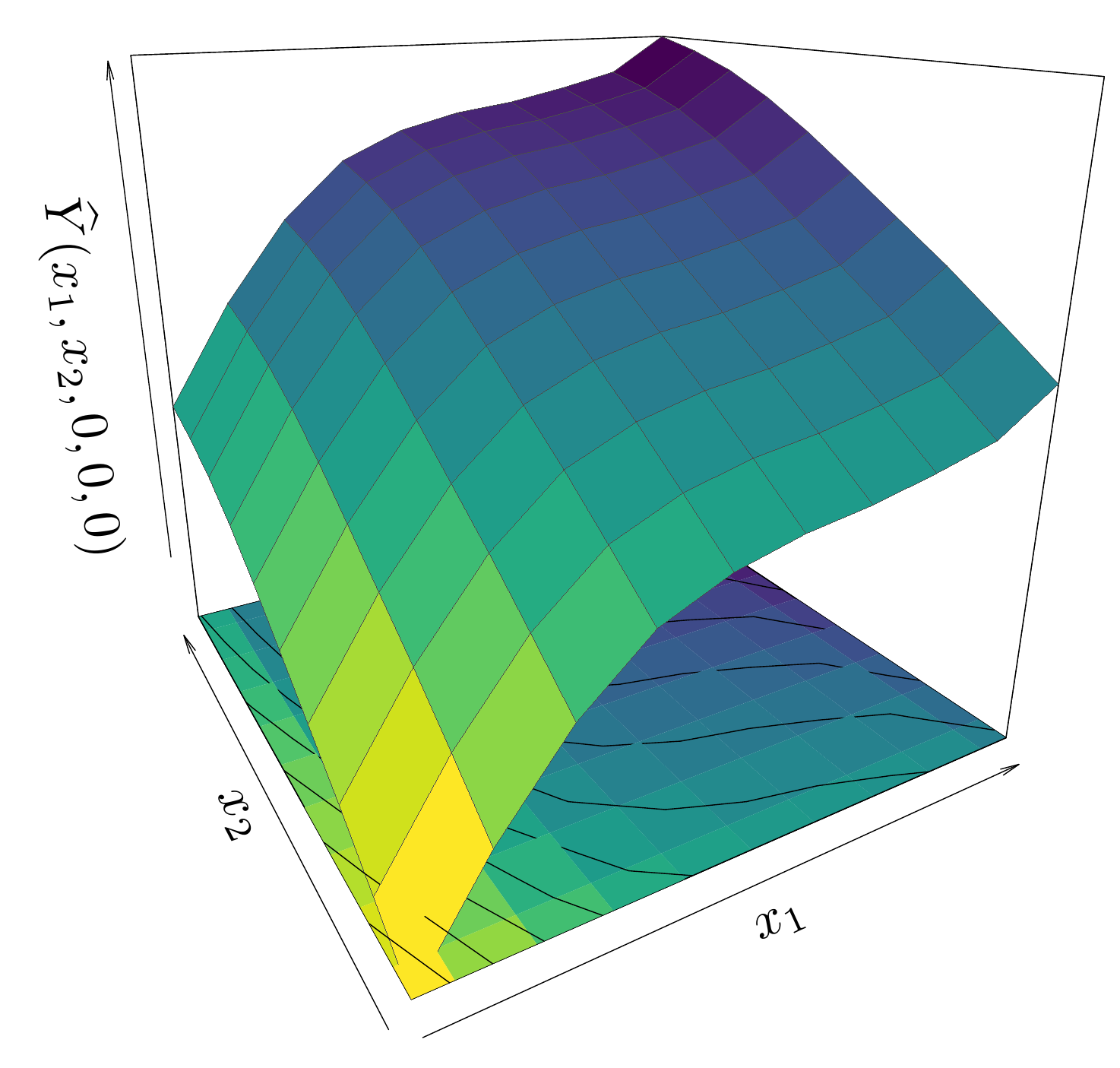}
		%	}
	%	\subfigure[$\widehat{Y}_m(x_1, 0, x_3, 0, 0)$]{
		\includegraphics[width=0.23\textwidth]{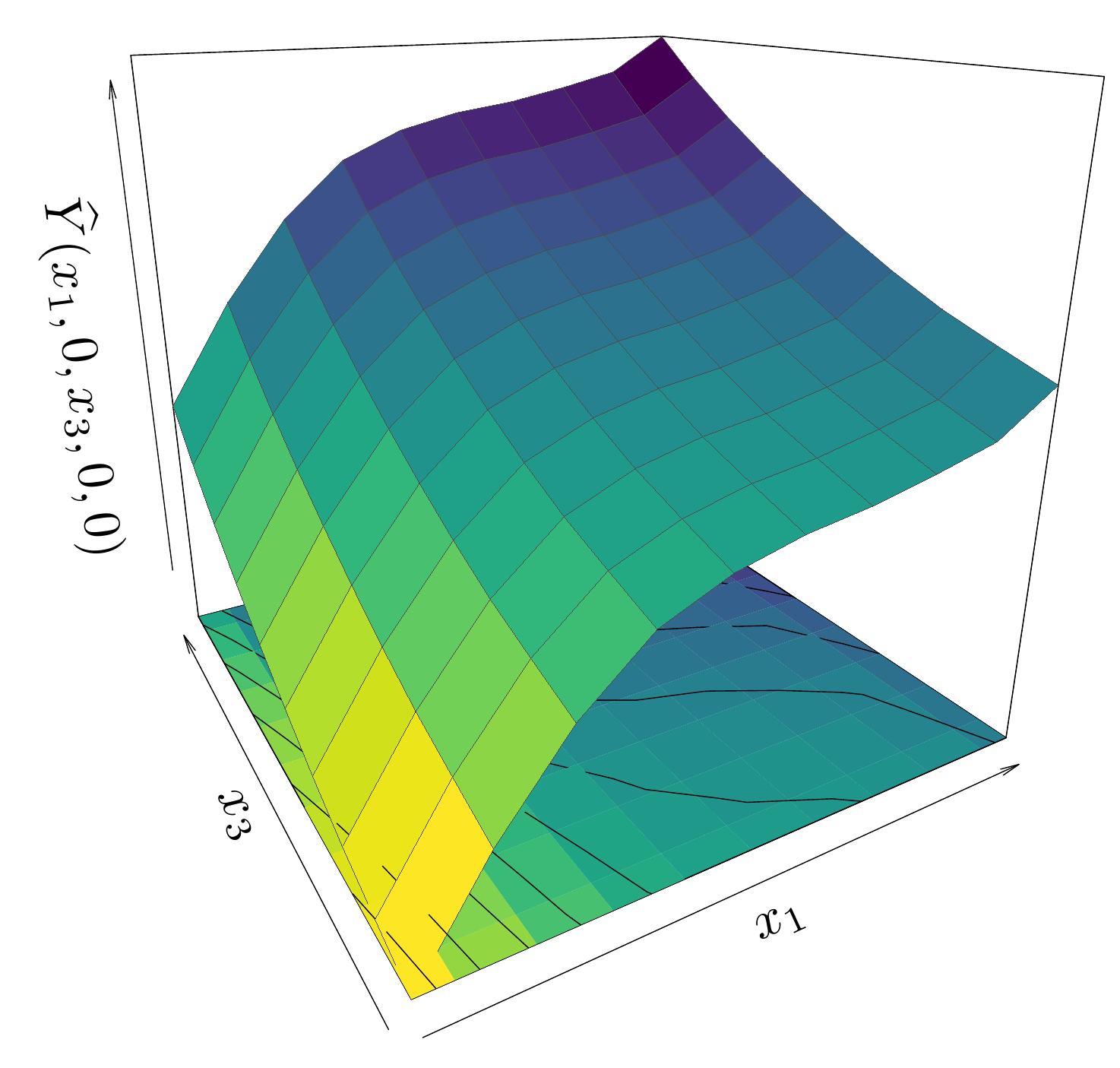}
		%	}			
	%	\subfigure[$\widehat{Y}_m(x_1, 0, 0, x_4, 0)$]{
		\includegraphics[width=0.23\textwidth]{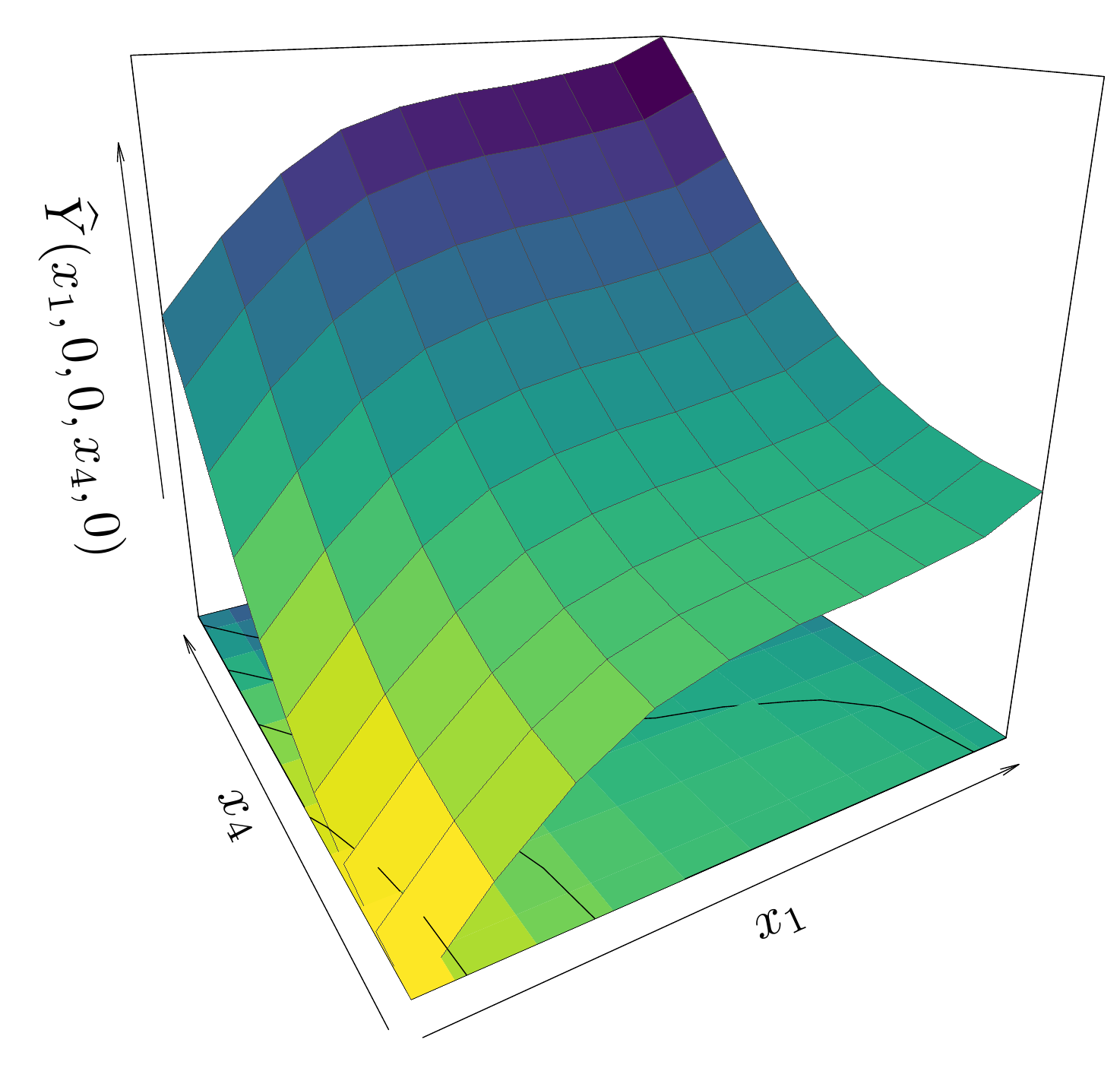}
		%	}
	%	\subfigure[$\widehat{Y}_m(x_1, 0, 0, 0, x_5)$]{
		\includegraphics[width=0.23\textwidth]{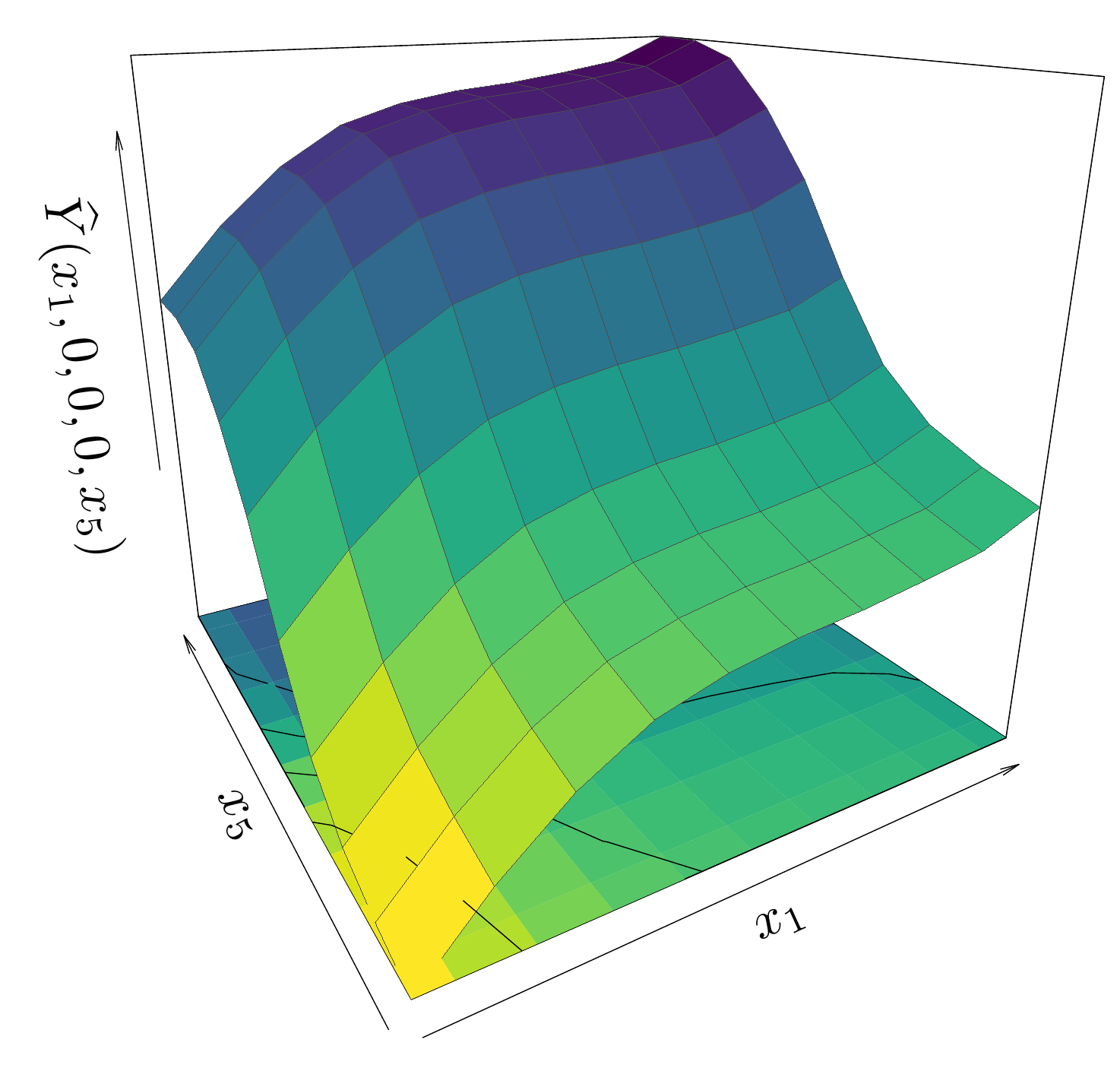}
		%	}
	\caption{Additive GP under monotonicity constraints in 5D with $n=10d$. 2D projections of the true profiles and the constrained GP mean predictions are shown in the first and second row, respectively.}
	\label{fig:ToyExample5DAGPs}
\end{figure*}	
%\begin{figure*}[t!]
	%	\centering
	%	\subfigure[$y(x_1, x_2, 0, 0, 0)$]{\includegraphics[width=0.22\textwidth]{toyExample3Monotone5DTrue12.pdf}}
	%	\subfigure[$y(x_1, 0, x_3, 0, 0)$]{\includegraphics[width=0.22\textwidth]{toyExample3Monotone5DTrue13.pdf}}
	%	\subfigure[$y(x_1, 0, 0, x_4, 0)$]{\includegraphics[width=0.22\textwidth]{toyExample3Monotone5DTrue14.pdf}}
	%	\subfigure[$y(x_1, 0, 0, 0, x_5)$]{\includegraphics[width=0.22\textwidth]{toyExample3Monotone5DTrue15.pdf}}
	%	
	%%	\vspace{-1ex}
	%	\subfigure[$\widehat{Y}_m(x_1, x_2, 0, 0, 0)$]{\includegraphics[width=0.22\textwidth]{toyExampleAGP5DProj2Dfig12.pdf}}
	%	\subfigure[$\widehat{Y}_m(x_1, 0, x_3, 0, 0)$]{\includegraphics[width=0.22\textwidth]{toyExampleAGP5DProj2Dfig13.pdf}}			
	%	\subfigure[$\widehat{Y}_m(x_1, 0, 0, x_4, 0)$]{\includegraphics[width=0.22\textwidth]{toyExampleAGP5DProj2Dfig14.pdf}}
	%	\subfigure[$\widehat{Y}_m(x_1, 0, 0, 0, x_5)$]{\includegraphics[width=0.22\textwidth]{toyExampleAGP5DProj2Dfig15.pdf}}
	%	\caption[Additive GP model under monotonicity constraints in 5D]{Additive GP model under monotonicity constraints in 5D. The true and the constrained predictive GP mean profiles are shown in the first and second row, respectively.}
	%	\label{fig:ToyExample5DAGPs}
	%\end{figure*}	

\subsubsection{Monotonicity in 5D}
\label{sec:numResults:subsec:5Dtoyexample}
%Here we show that, by considering additivity, constrained GPs can be performed more efficiently for $d \geq 2$ compared to the non-additive framework in~\eqref{eq:constrFinApproxHD}.

We start by an example in small dimensions, in order to compare with non-additive constrained GPs which do not scale to high dimensions. We thus
consider the additive function given by
\begin{align} \label{eq:toy5Dmonotonicity} 
	y(\Bx)
	= \arctan(5x_1) + \arctan(2x_2) + x_3
	+ 2x_4^2 + \frac{2}{1 + \exp\{-10(x_5-\frac{1}{2})\}},
\end{align}
with $\Bx \in [0, 1]^5$. Observe that $y$ is non-decreasing with respect to all its input variables. We evaluate $y$ on a maximin LHD over $[0, 1]^5$ at $n = 50$ locations using~\cite{Dupuy2015DiceDesign}.
In this example, as explained in the introduction of the section, we have chosen a maximin LHD rather than a random LHD, and $n=10d$ rather than $n=2d$, because we also consider non-additive GPs for which these settings are recommended. We fix 20 knots per dimension, leading to a total of $m = 100$ knots.

Figure~\ref{fig:ToyExample5DAGPs} shows the additive cGP mean prediction under monotonicity constraints considering $10^4$ HMC samples. Our framework leads to improvements on both CPU time and quality of predictions compared to the non-additive cGP in~\cite{LopezLopera2017FiniteGPlinear}. Due to computational limitations, the non-additive cGP is performed with a reduced but tractable number of knots per dimension set to $m_1 = m_2 = m_4 = 4$, $m_3 = 2$ %(due to the linearity of $y_3$) 
and $m_5 = 6$ %(due to the high variability of $y_5$)
, for a total of $m = 768$ knots. With this setup, and considering a factorial DoE with 11 test points per dimension, the additive cGP yields $Q^2 = 99.8\%$, an absolute improvement of 1.3\% compared to the non-additive cGP. %$Q^2 = 98.5$
In term of the CPU times, the computation of the cGP mode and cGP mean using the additive framework are obtained in 6s and 0.8s (respectively), a significant speed-up compared to the non-additive cGP that required 28.9s and 24.3 minutes.

%> sim.model$predtime
%utilisateur     système      écoulé 
%3.78        1.45        5.72 
%> sim.model$simtime
%utilisateur     système      écoulé 
%0.58        0.02        0.66 

%utilisateur     système      écoulé 
%27.26        1.20       28.89 
%> sim.model$simtime
%utilisateur     système      écoulé 
%1456.19        0.84     1459.97 
%[1] 0.9853723
%> 1459.97/60
%[1] 24.33283

\subsubsection{Monotonicity in hundreds of dimensions}
\label{sec:numResults:subsec:MonotonicityHD}
For testing the additive cGP in high dimensions, we consider the target function used in~\cite{Bachoc2022MaxMod}:
\begin{equation}
	y(\Bx) = \sum_{i=1}^{d} \arctan\left(5\bigg[1-\frac{i}{d+1}\bigg] x_i\right),
	\label{eq:modatanFun}
\end{equation}
with $\Bx \in [0,1]^d$. This function exhibits decreasing growth rates as the index $i$ increases. For different values of $d$, we assess GPs with and without constraints. We fix 
%We use Mat\'ern 5/2 kernels~\cite{Durrande2012AdditiveGPs,Rasmussen2005GP} 
%with 
$\Btheta = (\sigma_i^2, \ell_i)_{1 \leq i \leq d} = (1, 2)$. Although $\Btheta$ can be estimated our focus here is to assess both computational cost and quality of cGP predictors. The cGP mean is obtained by averaging $10^3$ HMC samples. We set 5 knots per dimension. %As training data, we evaluate $f$ at random LHDs. Here, we seek to evaluate the impact of reinforcing GPs with monotonicity constraints when data are scarce. Thus, we fix $n = 2d$, a value that corresponds to the minimal amount of data to estimate the covariance parameters. %We must remark that it is still possible to contemplate $n = d+1$ by using the additive kernel structure in~\cite{Duvenaud2011AdditiveGPs} that considers only one variance parameter. 
%As discussed in~\cite{Stein1987LHS}, since the use of training LHDs helps more when additive functions are considered, we believe that $n = 2d$ is enough for having accurate predictions.

Table~\ref{tab:monotonicityHD} summarizes the CPU times and $Q^2$ values of the cGP predictors. The $Q^2$ criterion is computed considering $10^5$ evaluations of $y$ based on a LHD fixed for all the experiments. Results are shown over 10 replicates with different random LHDs. From Table~\ref{tab:monotonicityHD}, it can be observed that the cGPs lead to prediction improvements, with $Q^2$ (median) increments between 1.7-5.8\%. Although the cGP mean predictor provides $Q^2$ values above 88\%, it becomes expensive when $d \geq 500$. On the other hand, the cGP mode yields a trade-off between computational cost and quality of prediction. %2D representations of the cGP mode predictors are shown in Appendix~\ref{app:HDGPexample}.
\begin{table}[t!]
	\centering
	\caption{Results (mean $\pm$ one standard deviation over 10 replicates) on the monotonic example in Section~\ref{sec:numResults:subsec:MonotonicityHD} with $n=2d$. Both computational cost and quality of the cGP predictions (mode and mean) are assessed. %The GP models are trained using random LHDs with $n=d+1$.
		For the computation of the cGP mean, $10^3$ ($^\dagger50$) HMC samples are used.}
	\label{tab:monotonicityHD}
	\begin{tabular}{ccccccc}
		\toprule			
		\multirow{2}{*}{$d$}  & \multirow{2}{*}{$m$}  & \multicolumn{2}{c}{CPU Time $[s]$} & \multicolumn{3}{c}{$Q^2$ [\%]} \\
		& & cGP mode & cGP mean & GP mean & cGP mode & cGP mean \\		
		\midrule						
		10   & 50   & 0.1 $\pm$ 0.1    & 0.1 $\pm$ 0.1 					& 82.3 $\pm$ 6.2 & 83.8 $\pm$ 4.2  & \textbf{88.1 $\pm$ 1.7} \\
		100  & 500  & 0.4 $\pm$ 0.1    & 5.2 $\pm$ 0.5 					& 89.8 $\pm$ 1.6 & 90.7 $\pm$ 1.4  & \textbf{91.5 $\pm$ 1.3} \\
		250  & 1250 & 4.2 $\pm$ 0.7    & 132.3 $\pm$ 26.3   			& 91.7 $\pm$ 0.8 & 92.9 $\pm$ 0.6  & \textbf{93.4 $\pm$ 0.6} \\
		500  & 2500 & 37.0 $\pm$ 11.4   & $^\dagger$156.9 $\pm$ 40.5 	& 92.5 $\pm$ 0.6 & 93.8 $\pm$ 0.5  & $^\dagger$\textbf{94.3 $\pm$ 0.5} \\
		1000 & 5000 & 262.4 $\pm$ 35.8 & $^\dagger$10454.3 $\pm$ 3399.3 & 92.6 $\pm$ 0.3  & 94.6 $\pm$ 0.2 & $^\dagger$\textbf{95.1 $\pm$ 0.2}\\ % 100 HMC samples
		\bottomrule
	\end{tabular}
	%
	%            [,1]         [,2]
	%[1,]   262.38700   35.7885544
	%[2,] 10454.28000 3399.2643239
	%[3,]    92.62892    0.3154668
	%[4,]    94.59721    0.2392307
	%[5,]    95.11588    0.2179173
	
	%	\begin{tabular}{ccccccc}
		%		\toprule			
		%		\multirow{2}{*}{$d$}  & \multirow{2}{*}{$m$}  & \multicolumn{2}{c}{CPU Time $[s]$} & \multicolumn{3}{c}{$Q^2$ [\%]} \\
		%		& & cGP mode & cGP mean & GP mean & cGP mode & cGP mean \\		
		%		\midrule						
		%		%		2    & 10   & 0.1 $\pm$ 0.1    & 0.1 $\pm$ 0.1 		& 47.1 $\pm$ 23.1 & 49.1 $\pm$ 20.9 		& \textbf{79.3 $\pm$ 4.4} \\
		%		10   & 50   & 0.1 $\pm$ 0.1    & 0.1 $\pm$ 0.1 		& 63.9 $\pm$ 10.6 & 67.9 $\pm$ 6.3 			& \textbf{82.8 $\pm$ 1.8} \\
		%		%		50   & 250  & 0.1 $\pm$ 0.1    & 0.9 $\pm$ 0.1 		& 69.6 $\pm$ 5.4  & 74.3 $\pm$ 3.8 			& \textbf{82.5 $\pm$ 1.6} \\
		%		100  & 500  & 0.3 $\pm$ 0.1    & 5.9 $\pm$ 0.5 		& 69.2 $\pm$ 3.3  & 75.0 $\pm$ 2.0 			& \textbf{81.7 $\pm$ 1.3} \\
		%		250  & 1250 & 3.4 $\pm$ 0.1    & 140.7 $\pm$ 16.5   & 65.8 $\pm$ 4.0  & 73.8 $\pm$ 1.3   		& \textbf{80.1 $\pm$ 1.1} \\
		%		500  & 2500 & 28.1 $\pm$ 2.6   & 1872.4 $\pm$ 258.9 & 66.0 $\pm$ 1.3  & 74.0 $\pm$ 1.0  		& \textbf{79.5 $\pm$ 0.8} \\
		%		1000 & 5000 & 190.2 $\pm$ 3.1 & $^\dagger$9562.3 $\pm$ 4140.8 			& 66.3 $\pm$ 1.1  & 74.5 $\pm$ 1.0 & $^\dagger$\textbf{79.5 $\pm$ 0.6}\\ % 100 HMC samples
		%		\bottomrule
		%	\end{tabular}
\end{table}

\subsection{MaxMod algorithm}
% \andres{In our experiments, MaxMod is initialized with the input dimension in which the cGP mode $\widehat{Y}_{\text{MaxMod},0}$ maximizes the squared-norm. Then, new ones are added using Algorithm~\ref{alg:iterative2}.}{(specified in the Algoritho)}

\subsubsection{Dimension reduction illustration}
\label{section:numerical:experiments:subsec:dimension:reduction}
We test the capability of MaxMod to account for dimension reduction considering the function in~\eqref{eq:modatanFun}. In addition to $(x_1, \ldots, x_d)$, we include $D - d$ virtual variables, indexed as $(x_{d+1}, \ldots, x_{D})$, which will compose the subset of inactive dimensions. %since $f$ does not depend on them.
For any combination of $D \in \{10,20\}$ and $d \in \{2,3,5\}$, we apply Algorithm~\ref{alg:iterative2} to find the true $d$ active dimensions. For each value of $D$, we evaluate $f$ at a maximin LHD with $n = 10 D$. Similarly to Section~\ref{sec:numResults:subsec:5Dtoyexample} where we compare to non-additive GPs, we have used the common GP settings rather than the ones used here for additive GPs (random LHD, $n=2d$). We compare the accuracy of two modes:
\begin{itemize}
	\item $\widehat{Y}_{\text{MaxMod}}$: the mode resulting from the additive cGP and additive MaxMod.
	\item $\widetilde{Y}_{\text{MaxMod}}$: the mode resulting from the non-additive cGP in~\cite{LopezLopera2017FiniteGPlinear} with equispaced one-dimensional knots but where the number of knots per dimension is the same as for $\widehat{Y}_{\text{MaxMod}}$.
\end{itemize}
%We assess the quality of the two modes in terms of the (normalized) bending energy~\cite{DeBoor2001GuideSplines}: $E_n(f, \widehat{Y}) = \int_{[0,1]^{D}} (f(x) - \widehat{Y}(x) )^2dx/\int_{[0,1]^{D}} f^2(x) dx$. By comparing the $E_n$ values using $\widehat{Y}_{\text{MaxMod}}$ to the one provided by $\widetilde{Y}_{\text{MaxMod}}$, we aim at showing that considering the additive structure and MaxMod lead to smaller errors.

Table~\ref{tab:ActInactivePerformances} shows that MaxMod correctly identifies the $d$ dimensions that are actually active where the most variable ones have been refined with more knots. There, MaxMod is considered to converge if the squared-norm criterion is smaller than $\epsilon = 5 \times 10^{-4}$. In terms of the $Q^2$ criterion, $\widehat{Y}_{\text{MaxMod}}$ leads to more accurate results compared to $\widetilde{Y}_{\text{MaxMod}}$, with $Q^2 \geq 99.7\%$ in all the cases. 
\begin{table}[t!]
	\centering
	%	\bigskip
	
	\caption{$Q^2$ Performance of the MaxMod algorithm for the example in Section~\ref{section:numerical:experiments:subsec:dimension:reduction} with $n=10D$. 
		%		Results are shown for both $\widehat{Y}_{\text{MaxMod}}$ (mode of the additive constrained GP) and $\widetilde{Y}_{\text{MaxMod}}$ (mode of the non-additive constrained GP) considering $D \in \{10,20\}$ and $d \in \{2,3,5\}$. The activated dimensions, number of one-dimensional knots per active dimension and $E_n$ results are displayed for any combination of $D$ and $d$. Results here are obtained by fixing a tolerance equal to $5\times 10^{-4}$ for the stopping rule. 
	}
	\label{tab:ActInactivePerformances}
	\begin{tabular}{cccccc}
		\toprule
		$D$ & $d$ & active dimensions & knots per dimension & $Q^2 (\widetilde{Y}_{\text{MaxMod}}) \ [\%]$ & $Q^2 (\widehat{Y}_{\text{MaxMod}}) \ [\%]$ \\
		\midrule	
		\multirow{3}{*}{10} & 2 & (1, 2) & (4, 3) & 99.5 & \textbf{99.8} \\ 
		& 3 & (1, 2, 3) & (5, 5, 3) & 97.8 & \textbf{99.8} \\
		& 5 & (1, 2, 3, 4, 5) & (4, 4, 4, 3, 2) & 91.4 & \textbf{99.8} \\
		\midrule
		\multirow{3}{*}{20} & 2 & (1, 2) & (5, 3) & 99.7 & \textbf{99.8} \\ 
		& 3 & (1, 2, 3) & (4, 4, 3) & 99.0 & \textbf{99.9} \\
		& 5 & (1, 2, 3, 4, 5) & (5, 4, 3, 3, 2) & 96.0 & \textbf{99.7} \\
		\bottomrule
	\end{tabular}
	\vspace{-1.3ex}
\end{table}

\begin{figure}
	\vspace{-1.8ex}
	\centering
	%	\subfigure[\label{subfig:MascaretActplot}]
	{\includegraphics[width=\linewidth]{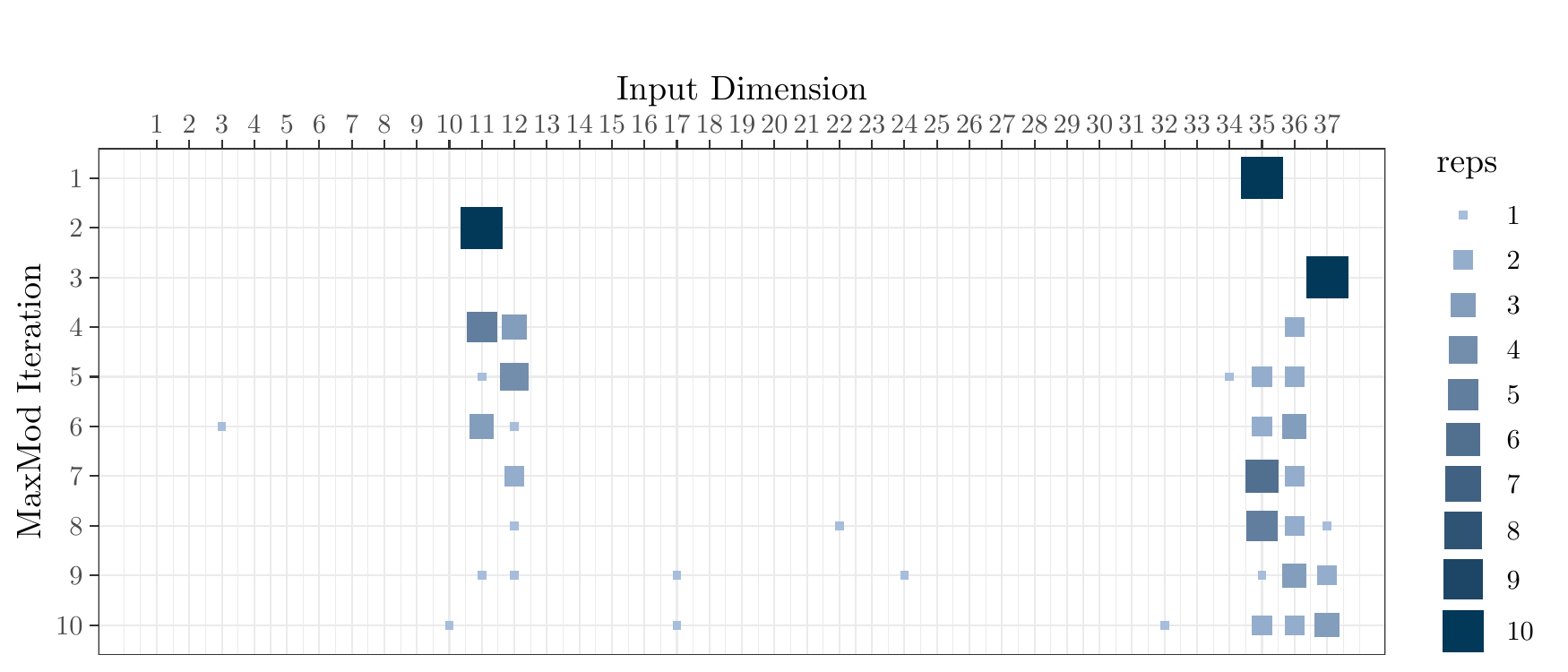}}
	\begin{minipage}{0.55\linewidth}
		%		\subfigure[\label{subfig:MascaretQ2}]
		{\includegraphics[width=\linewidth]{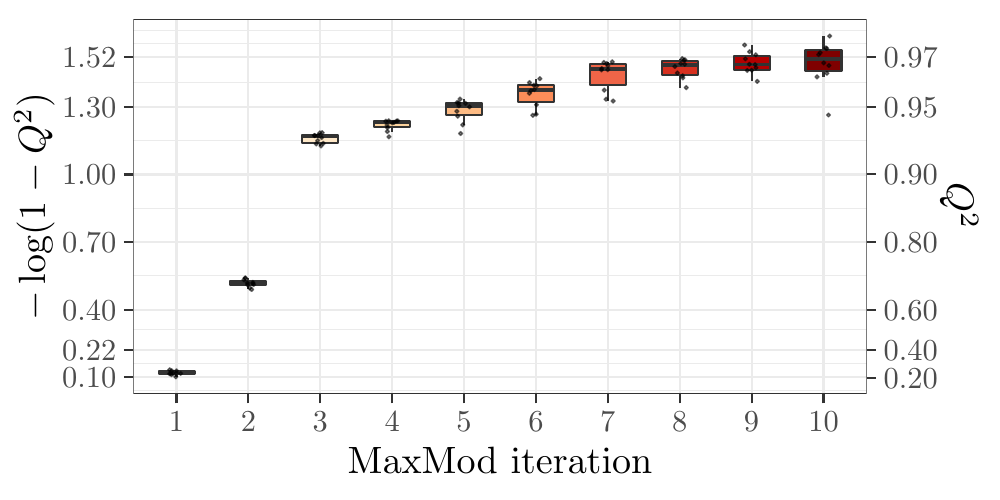}}		
	\end{minipage}
	\hspace{0.5ex}
	\begin{minipage}{0.43\linewidth}
		\vspace{-2ex}
		\caption{Results on the flood application in Section~\ref{sec:numResults:subsec:realApp} with $n = 2d = 74$. The panels show: %\subref{subfig:MascaretActplot}
			(top) the choice made by MaxMod per iteration and %\subref{subfig:MascaretQ2} 
			(left) $Q^2$ boxplots per iteration of MaxMod. Results are computed over 10 random replicates. For the first panel, a bigger and darker square implies a more repeated choice.}
		\label{fig:MascaretResults}
	\end{minipage}	
	\vspace{-4ex}	
\end{figure}	

\subsubsection{Real application: flood study of the Vienne river}
\label{sec:numResults:subsec:realApp}

The database contains a flood study conducted by the French multinational electric utility company EDF in the Vienne river~\cite{Petit2016Mascaret}. It is composed of $N = 2\times 10^4$ simulations of an output $H$ representing the water level and 37 inputs depending on: a value of flow upstream, data on the geometry of the bed, and Strickler friction coefficients (see further details in Appendix~\ref{app:Mascaret:subsec:database}). 
This database was generated as a random design, assuming that the inputs are independent and follow specific probability distributions. In order to apply our framework, we use componentwise quantile transformations to make the inputs uniform. Such transformations do not modify the additivity and monotonicity properties, but may increase non-linearities. 
With expert knowledge, it is possible to identify that $H$ is decreasing along the first $24$ input dimensions and increasing along dimension $37$. 
The behavior along inputs $25$ to $36$ is not clear. %, which makes the problem harder as we cannot make a monotonicity assumption for these $12$ dimensions.
A previous sensitivity analysis in \cite{Petit2016Mascaret}, done with the full database, has shown that the additive assumption is realistic here, and that inputs 11, 35 and 37 explain most of the variance. We aim at obtaining similar conclusions with a fraction of the simulation budget, using the information of additivity and monotonocity constraints with respect to dimensions 1 to 24 and 37.

In this experiment, MaxMod is applied for 10 replicates with random training sets and 10 iterations. 
The settings described at the beginning of Section~\ref{sec:numResults} are followed as much as possible. As the database is fixed, we cannot directly use a random LHD of size $n=2d$. Thus, we pick the closest subset of size $n$ (measured by the Euclidean norm).
%The cGPs are fitted using Mat\'ern 5/2 kernels. %Compared to a fully monotone cGP in dimension 37, less prior information is encoded in a cGP with monotonicity constraints on dimensions 1 to 24 and 37. Thus, 
%Still in the context when data are scarce,
%We consider $n = 2d$. 
The remaining data are used for testing the cGPs. Figure~\ref{fig:MascaretResults} shows the choice of MaxMod per iteration and for the 10 replicates (adding or refining variables). The first three iterations of the algorithm show that MaxMod activates mainly dimensions 11, 35 and 37. In the subsequent iterations, MaxMod either refines the these dimensions or activates dimensions 12 and 36. 
%It is confirmed with a Sobol analysis that these five dimensions are actually the most relevant, where dimensions 11, 35 and 37 are capable to explain almost all the variance of $H$. 
In terms of the quality of predictions (Figure~\ref{fig:MascaretResults}), MaxMod leads to $Q^2 > 0.92$ after the first three iterations.

Similar results are obtained for $n = 3d$ and $n = 4d$ (see Appendix~\ref{app:Mascaret:subsec:results}). MaxMod indeed activates dimensions 11, 35 and 37 in the first three iterations and refines them mainly in subsequent iterations. This allows to conclude that MaxMod correctly identifies the most relevant input dimensions and that accurate predictions are obtained once those dimensions are activated.

\FloatBarrier

\begin{ack}
	%	Use unnumbered first level headings for the acknowledgments. All acknowledgments go at the end of the paper before the list of references. Moreover, you are required to declare funding (financial activities supporting the submitted work) and competing interests (related financial activities outside the submitted work). More information about this disclosure can be found at: \url{https://neurips.cc/Conferences/2022/PaperInformation/FundingDisclosure}.
	%	
	%	Do {\bf not} include this section in the anonymized submission, only in the final paper. You can use the \texttt{ack} environment provided in the style file to autmoatically hide this section in the anonymized submission.
	
	This research was conducted within the frame of the ANR GAP Project (ANR-21-CE40-0007) and the consortium in Applied Mathematics CIROQUO, gathering partners in technological research (BRGM, CEA, IFPEN, IRSN, Stellantis, Storengy) and academia (Ecole Centrale de Lyon, Mines Saint-Etienne, INRIA, University of Toulouse and CNRS) in the development of advanced methods for Computer Experiments. We thank Bertrand Iooss and EDF R\&D/LNHE for providing the Mascaret test case and Sébastien Petit who has performed the computations on this model.
\end{ack}

%\section*{References}
%\medskip
%{
	%\small
	%
	%\subsection{Citations within the text}
	%The \verb+natbib+ package will be loaded for you by default.  Citations may be
	%author/year or numeric, as long as you maintain internal consistency.  As to the
	%format of the references themselves, any style is acceptable as long as it is
	%used consistently.
	%
	%The documentation for \verb+natbib+ may be found at
	%\begin{center}
		%	\url{http://mirrors.ctan.org/macros/latex/contrib/natbib/natnotes.pdf}
		%\end{center}
	%Of note is the command \verb+\citet+, which produces citations appropriate for
	%use in inline text.  For example,
	%\begin{verbatim}
		%	\citet{hasselmo} investigated\dots
		%\end{verbatim}
	%produces
	%\begin{quote}
		%	Hasselmo, et al.\ (1995) investigated\dots
		%\end{quote}
	%
	%[1] Alexander, J.A.\ \& Mozer, M.C.\ (1995) Template-based algorithms for
	%connectionist rule extraction. In G.\ Tesauro, D.S.\ Touretzky and T.K.\ Leen
	%(eds.), {\it Advances in Neural Information Processing Systems 7},
	%pp.\ 609--616. Cambridge, MA: MIT Press.
	%
	%
	%[2] Bower, J.M.\ \& Beeman, D.\ (1995) {\it The Book of GENESIS: Exploring
		%  Realistic Neural Models with the GEneral NEural SImulation System.}  New York:
	%TELOS/Springer--Verlag.
	%
	%
	%[3] Hasselmo, M.E., Schnell, E.\ \& Barkai, E.\ (1995) Dynamics of learning and
	%recall at excitatory recurrent synapses and cholinergic modulation in rat
	%hippocampal region CA3. {\it Journal of Neuroscience} {\bf 15}(7):5249-5262.
	%}

\bibliographystyle{IEEEtran}
\bibliography{references}

\clearpage

\clearpage

\appendix

\section{Appendix}
\label{app:appendix}

\subsection{Satisfying inequality constraints everywhere for componentwise convexity}
\label{app:appendix:subsec:further}

For componentwise convexity, $\mathcal{E}_i$ in~\eqref{eq:convexEquiv} is the set of one-dimensional convex functions and $\mathcal{C}_i$ is given by, with $(t_1,\ldots,t_{m_i}) = ( t_{(1)}^{(\subdiv_i)}, \ldots, t_{(m_i)}^{(\subdiv_i)} )$,
%and for convexity constraints in~\eqref{eq:convexity} we have, when $m_i=m$, $\subdiv_i = \{t_1  ,\ldots , t_m\}$,
\begin{equation}
	\mathcal{C}_i
	= \left\{ \boldsymbol{c} \in \realset{m_i}; \forall j = 3, \cdots, m_i : \frac{c_{j} - c_{j-1}}{t_j-t_{j-1}} \geq \frac{c_{j-1} - c_{j-2}}{t_{j-1}-t_{j-2}} \right\}.
	\label{eq:convexsetKnotsConvexityMonotone}
\end{equation}
We can see that, for each $i \in \{1,\ldots,d\}$ and each $\boldsymbol{x}_{-i} \in [0,1]^{d-1}$, the one-dimensional cut $u \in [0,1] \mapsto Y_{\subdiv}( u,\boldsymbol{x}_{-i} )$ (where only the input $i$ is varying) is convex if and only if each additive component $Y_{i,\subdiv_i}$ is convex on $[0,1]$. This happens if and only if the $m_i-2$ inequality constraints in~\eqref{eq:convexsetKnotsConvexityMonotone} are satisfied.

\subsection{Speed-up of numerical implementation when $m \ll n$}
\label{app:Schur}

\paragraph{Notation.}
The expression~\eqref{eq:additiveSumPhiXi} can be rewritten in the matrix form $\BY_n := \sum_{i=1}^{d} \BPhi_i \Bxi_i = \BPsi \Bxi$ with $\BPsi = [\BPhi_1, \ldots, \BPhi_d]$ an $n \times m$ matrix and $\Bxi = [\Bxi_1^\top, \ldots, \Bxi_d^\top]^\top$ an $m \times 1$ vector. With this notation, %we have that $\BPsi \xi + \tau = y_n$, and
we have the following expressions for $\Bmu_c$ and $\BSigma_c$ (see Section~\ref{sec:contrAdditiveGPs:subsec:MAP}):
\begin{align*}
	\Bmu_c
	&= \BSigma \BPsi^\top \BC^{-1}  \By_n,
	%	\qquad 
	\\
	\BSigma_c
	&= \BSigma - \BSigma \BPsi^\top \BC^{-1} \BPsi  \BSigma,
\end{align*}
where $\BC = \BPsi {\BSigma} {\BPsi}^\top + \tau^2 \BI_n$ and $\BSigma = \operatorname{bdiag}(\BSigma_1, \ldots, \BSigma_d)$ is an $m \times m$ block diagonal matrix.

The computation of ${\BC}^{-1}$, required in GP predictions and estimation of the covariance parameters via maximum likelihood, can be performed more efficiently when $m \ll n$ using properties of matrices~\cite{Rasmussen2005GP,Press1992Num}. Next, we detail how the computational complexity of ${\BC}^{-1}$ can be reduced from $\mathcal{O}(n^3)$ to $\mathcal{O}(m^3)$. We also provide an efficient computation of the determinant $|\BC|$ that is required in covariance parameter estimation.

\paragraph{Computation of ${\BC}^{-1}$.} %Using the matrix inversion lemma (see, e.g.,~\cite{Rasmussen2005GP}, Appendix A.3), we have that the inverse of $\BC \in \realset{n \times n}$ can be written in terms of the inverses of $m \times m$ matrices:
%\begin{align*}
	%	\BC^{-1}
	%	= [\BPsi {\BSigma} {\BPsi}^\top + \tau^2 \BI_n]^{-1}
	%	= \tau^{-2} [\BI_n - \BPsi (\tau^{2} \BSigma^{-1} + \BPsi^\top \BPsi)^{-1} \BPsi^\top].
	%\end{align*}
To avoid numerical instability issues% due to the double inversion in the expression above
, we first rewrite $\BC$ in terms of the Cholesky decomposition $\BSigma = \BL \BL^\top$. Here, $\BL$ is an $m \times m$ block lower-triangular matrix given by $\BL = \operatorname{bdiag}(\BL_1, \cdots, \BL_d)$ with $\BL_i$ the Cholesky decomposition of $\BSigma_{i}$ for $i = 1, \ldots, d$. Thus, $\BC^{-1} = [(\BPsi \BL) \BI_m ({\BPsi} \BL)^\top + \tau^2 \BI_n]^{-1}$. Now, by applying the matrix inversion lemma (see, e.g.,~\cite{Rasmussen2005GP}, Appendix A.3), we obtain:
\begin{align*}
	\BC^{-1}
	= [(\BPsi \BL) \BI_m ({\BPsi} \BL)^\top + \tau^2 \BI_n]^{-1}
	= \tau^{-2} [\BI_n - \BPsi \BL (\tau^{2} \BI_m + \BL^\top {\BPsi}^\top  {\BPsi} \BL)^{-1} \BL^\top {\BPsi}^\top].
\end{align*}
We need to compute now the inversion of the $m \times m$ matrix $\BA = \tau^{2} \BI_m + \BL^\top {\BPsi}^\top  {\BPsi} \BL$. Let $\widetilde{\BL}$ be the Cholesky decomposition of $\BA$. Denote $\BP = \BPsi \BL$. Then $\BC^{-1}$ is given by
\begin{align*}
	\BC^{-1}
	= \tau^{-2} [\BI_n - (\widetilde{\BL}^{-1} \BP^\top)^\top \widetilde{\BL}^{-1} \BP^\top].
\end{align*}
Since $\widetilde{\BL}$ is a lower triangular matrix, then the linear system $\widetilde{\BL} \bm{M} = \BP^\top$ can be sequentially solved in $\bm{M}$.

In addition to reducing complexity to $\mathcal{O}(m^3)$ for $m \ll n$, some of the intermediate steps here can be parallelized. For instance, the computation of the $d$ Cholesky matrices $\BL_i$ with $m_i < m$.

\paragraph{Computation of $|\BC|$.} From~\cite{Rasmussen2005GP} (Appendix A.3), %we also have an efficient implementation for $|\BC|$:
%\begin{align*}
	%	|\BC|
	%	= |\BPsi \BSigma \BPsi^\top + \tau^2 \BI_n|
	%	= \tau^{2(n-m)} |\BSigma| |\tau^{2} \BSigma^{-1} + \BPsi^\top \BPsi|,
	%\end{align*}
%with $|\BSigma| = |\Sigma_1| \times \cdots \times |\Sigma_d|$.
and considering the Cholesky decomposition $\BSigma = \BL \BL^\top$, %and applying once again the formula from~\cite{Rasmussen2005GP}, 
we have that the determinant $|\BC|$ is given by
\begin{align*}
	|\BC|
	= | (\BPsi \BL) \BI_m (\BPsi \BL)^\top + \tau^2 \BI_n|	
	= \tau^{2(n-m)} |\tau^{2} \BI_m + \BL^\top \BPsi^\top \BPsi \BL|
	= \tau^{2(n-m)} |\widetilde{\BL}|^2,
\end{align*}
where $|\widetilde{\BL}| = \prod_{j=1}^{m} \widetilde{L}_{j,j}$ with $\widetilde{L}_{j,j}$ the element associated to the $j$-th row and $j$-th column of $\widetilde{\BL}$.

\subsection{Squared-norm criterion}
\label{app:maxmod}

The next lemma is elementary to show and its second part is also given in~\cite{Bachoc2022MaxMod}. 
\begin{lemma} \label{lemma:moments}
	Consider a subdivision $\subdiv = \{u_1 , \ldots , u_m\}$ and write its ordered knots as $0=u_{(1)} < \cdots < u_{(m)}=1$. For $j=1,\ldots,m$, write the hat basis function $\phi_j$ as $\phi_{u_{(j-1)},u_{(j)},u_{(j+1)}}$ in~\eqref{page:phi:u:v:w}, with the conventions $u_{(0)} = -1$ and $u_{(m+1)} = 2$. For $j=1,\ldots,m$, we have
	\begin{equation*}
		E^{(\subdiv)}_{j}  
		:=
		\int_{0}^1
		\phi_j(t) dt
		=
		\begin{cases}
			\frac{u_{(j+1)}}{2}
			& 
			~ ~
			\text{if} 
			~ ~
			j=1 
			\\
			\frac{1-u_{(j-1)}}{2}
			& 
			~ ~
			\text{if} 
			~ ~
			j=m  
			\\
			\dfrac{
				u_{(j+1)} - u_{(j-1)}
			}{2}
			&
			~ ~
			\text{if} 
			~ ~
			j \in \{2 , \ldots , m-1 \}
		\end{cases}.
	\end{equation*}
	For $j,j'=1 , \ldots,m$, we have
	\begin{equation*}
		E^{(\subdiv)}_{j,j'} 
		:=
		\int_{0}^1
		\phi_j(t)
		\phi_{j'}(t)
		dt
		=
		\begin{cases}
			\frac{
				u_{(j+1)}
				-
				u_{(j)}
			}{
				3
			}
			& 	~ ~ \text{if} ~ ~
			j = j' = 1
			\\
			\frac{
				u_{(j)}
				-
				u_{(j-1)}
			}{
				3
			}
			&	~ ~ \text{if} ~ ~
			j = j' = m 
			\\
			\frac{
				u_{(j+1)}
				-
				u_{(j-1)}
			}{
				3
			}
			&	~ ~ \text{if} ~ ~
			j = j' \in \{2 , \ldots ,  m-1 \}
			\\
			\frac{
				u_{(j+1)}
				-
				u_{(j)}
			}{
				6
			}
			&	~ ~ \text{if} ~ ~
			j' = j +1 
			\\
			\frac{
				u_{(j)}
				-
				u_{(j-1)}
			}{
				6
			}
			&	~ ~ \text{if} ~ ~
			j' = j -1  
			\\
			0 
			&	~ ~ \text{if} ~ ~
			|j - j'| \geq 2
		\end{cases}.
	\end{equation*}
\end{lemma}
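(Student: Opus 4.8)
The plan is to prove Lemma~\ref{lemma:moments} by direct computation of the two families of integrals, exploiting the piecewise-linear structure of the hat functions $\phi_j$ given in~\eqref{page:phi:u:v:w}. The key observation is that $\phi_j$ is supported on $[u_{(j-1)},u_{(j+1)}]$, but since the subdivision lives in $[0,1]$ with $u_{(1)}=0$ and $u_{(m)}=1$, for the boundary indices $j=1$ and $j=m$ only the part of the support inside $[0,1]$ contributes, which is why the conventions $u_{(0)}=-1$ and $u_{(m+1)}=2$ must be handled separately. So I would split the argument into the interior case $j \in \{2,\ldots,m-1\}$ and the two boundary cases.

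First, for $E^{(\subdiv)}_j = \int_0^1 \phi_j(t)\,dt$ with $j$ interior: on $[u_{(j-1)},u_{(j)}]$ the function rises linearly from $0$ to $1$, contributing area $\tfrac12(u_{(j)}-u_{(j-1)})$, and on $[u_{(j)},u_{(j+1)}]$ it falls linearly from $1$ to $0$, contributing $\tfrac12(u_{(j+1)}-u_{(j)})$; summing gives $\tfrac12(u_{(j+1)}-u_{(j-1)})$. For $j=1$, only the descending branch on $[u_{(1)},u_{(2)}]=[0,u_{(2)}]$ lies in $[0,1]$ (the ascending branch is cut off at $0$ since $\phi_1(0)=1$), giving $\tfrac12 u_{(2)} = \tfrac12 u_{(j+1)}$; symmetrically for $j=m$ only the ascending branch on $[u_{(m-1)},1]$ remains, giving $\tfrac12(1-u_{(m-1)})$. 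This matches the three stated cases.

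Next, for $E^{(\subdiv)}_{j,j'} = \int_0^1 \phi_j \phi_{j'}\,dt$, the band structure is immediate: if $|j-j'|\ge 2$ the supports $[u_{(j-1)},u_{(j+1)}]$ and $[u_{(j'-1)},u_{(j'+1)}]$ overlap in at most a single point, so the integral vanishes. For $j=j'$ interior, I would compute $\int \phi_j^2$ over each of the two subintervals using $\int_0^h (s/h)^2\,ds = h/3$, obtaining $\tfrac13(u_{(j)}-u_{(j-1)}) + \tfrac13(u_{(j+1)}-u_{(j)}) = \tfrac13(u_{(j+1)}-u_{(j-1)})$; the boundary cases $j=1$ and $j=m$ retain only one subinterval, giving $\tfrac13(u_{(2)}-u_{(1)})$ and $\tfrac13(u_{(m)}-u_{(m-1)})$ respectively. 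For the off-diagonal case $j'=j+1$, both functions are nonzero only on the common interval $[u_{(j)},u_{(j+1)}]$ of length $h := u_{(j+1)}-u_{(j)}$, where $\phi_j$ falls from $1$ to $0$ and $\phi_{j+1}$ rises from $0$ to $1$; a change of variable reduces this to $\int_0^h (s/h)\bigl(1-s/h\bigr)\,ds = h/2 - h/3 = h/6$, i.e.\ $\tfrac16(u_{(j+1)}-u_{(j)})$, and the case $j'=j-1$ follows by symmetry $E^{(\subdiv)}_{j,j'}=E^{(\subdiv)}_{j',j}$.

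There is no real obstacle here; the lemma is, as the text says, elementary. The only point requiring mild care is bookkeeping at the boundary knots $j\in\{1,m\}$: one must check that the artificial values $u_{(0)}=-1$, $u_{(m+1)}=2$ do not leak into the formulas because the relevant branch of the hat function is clipped by the integration domain $[0,1]$. I would state this explicitly once and then the rest is arithmetic, with the second (diagonal/off-diagonal) part of the claim being exactly the computation already recorded in~\cite{Bachoc2022MaxMod}.
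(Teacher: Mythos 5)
Your computation is correct and is exactly the elementary, case-by-case integration that the paper has in mind: it gives no written proof, stating only that the lemma is ``elementary to show'' and citing~\cite{Bachoc2022MaxMod} for the second family of integrals. Your handling of the boundary indices $j\in\{1,m\}$ (where the conventions $u_{(0)}=-1$, $u_{(m+1)}=2$ are clipped by the integration domain $[0,1]$) and of the band structure for $|j-j'|\ge 2$ is precisely the bookkeeping needed, so nothing is missing.
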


\begin{proof}[Proof of Proposition~\ref{prop:maxmod:new:variable}]
	
	From a probabilistic point of view, assuming that the input variables are random variables, i.e. $(X_i, i \in \activeset \cup \{i^\star\})$ are uniformly distributed on $[0,1]$ and independent, then~\eqref{eq:maxmod:new:variable}
	can be rewritten as an expectation,
	\[
	I_{\subdiv,i^\star}
	=
	\mathbb{E} 
	\Bigg( 
	\bigg(
	\sum_{i \in \activeset} 
	\widehat{Y}_i (X_i) 
	- 
	\sum_{i \in \activeset \cup \{i\}} 
	\widehat{Y}_{i^\star,i} (X_i) 
	\bigg)^2
	\Bigg),
	\]
	where, for $i \in \activeset$,
	$
	\widehat{Y}_i
	=
	\sum_{j=1}^{m_i}
	\widehat{\xi}_{i,j} 
	\phi_{i,j},
	$
	and for $i \in \activeset \cup \{i^\star\}$,
	$
	\widehat{Y}_{i^\star , i}
	=
	\sum_{j=1}^{m_i}
	\widetilde{\xi}_{i,j} 
	\phi_{i,j}
	$ (with $m_{i^\star}=2$).
	Then 
	\begin{align} \label{eq:computing:integral:one}
		I_{\subdiv,i^\star} 
		& = 
		\mathrm{Var} 
		\bigg( 
		\sum_{i \in \activeset} 
		\widehat{Y}_i (X_i) 
		- 
		\sum_{i \in \activeset \cup \{i\}} 
		\widehat{Y}_{i^\star,i} (X_i) 
		\bigg)
		+
		\mathbb{E}^2 
		\bigg( 
		\sum_{i \in \activeset} 
		\widehat{Y}_i (X_i) 
		- 
		\sum_{i \in \activeset \cup \{i\}} 
		\widehat{Y}_{i^\star,i} (X_i) 
		\bigg)
		\notag 
		\\
		& =
		\sum_{i \in \activeset}
		\mathrm{Var} \bigg( 
		\sum_{j=1}^{m_i}
		(\widehat{\xi}_{i,j} - \widetilde{\xi}_{i,j})
		\phi_{i,j} (X_i)
		\bigg)
		+
		\mathrm{Var} 
		\bigg( 
		\sum_{j=1}^2
		\widetilde{\xi}_{i^\star,j} 
		\phi_{i^\star,j} (X_{i^\star}) 
		\bigg) \\
		& ~ + 
		\bigg(  
		\sum_{i \in \activeset}
		\sum_{j=1}^{m_i}
		(\widehat{\xi}_{i,j} - \widetilde{\xi}_{i,j}) 
		\mathbb{E} 
		\left( 
		\phi_{i,j} (X_i)
		\right)
		-
		\sum_{j=1}^2
		\widetilde{\xi}_{i^\star,j} 
		\mathbb{E}
		\left(
		\phi_{i^\star,j} (X_{i^\star}) 
		\right)
		\bigg)^2. 	\notag
	\end{align}
	
	Recall $\eta_{i,j} = \widehat{\xi}_{i,j} - \widetilde{\xi}_{i,j}$. We have for $i \in \activeset$,
	\begin{align}  \label{eq:computing:integral:two}
		\mathrm{Var} \bigg( 
		\sum_{j=1}^{m_i}
		\eta_{i,j}
		\phi_{i,j} (X_i)
		\bigg) 
		\notag
		&=
		\mathbb{E}
		\Bigg( 
		\bigg( 
		\sum_{j=1}^{m_i} 
		\eta_{i,j}
		\phi_{i,j}(X_i)
		\bigg)^2
		\Bigg)
		-
		\Bigg( 
		\mathbb{E}
		\bigg(
		\sum_{j=1}^{m_i} 
		\eta_{i,j}
		\phi_{i,j}(X_i)
		\bigg)
		\Bigg)^2
		\notag
		\\
		& = 
		\sum_{j,j' =1}^{m_i} 
		\eta_{i,j}
		\eta_{i,j'}
		E^{(\subdiv_i)}_{j,j'} 
		-
		\bigg( 
		\sum_{j=1}^{m_i} 
		\eta_{i,j}
		E^{(\subdiv_i)}_{j} 
		\bigg)^2,
	\end{align}
	with the notation of Lemma~\ref{lemma:moments}.
	To compute the term relative to dimension $i^\star$ in~\eqref{eq:computing:integral:one}, recall that, if $x$ belongs to the support $[0,1]$ of $X_{i^\star}$, then $\phi_{i^\star, 1}(x) = 1-x$ and $\phi_{i^\star, 2}(x) = x$. Hence,
	\begin{equation} \label{eq:computing:integral:three}
		\mathrm{Var} 
		\bigg( 
		\sum_{j=1}^2
		\widetilde{\xi}_{i^\star,j} 
		\phi_{i^\star,j} (X_{i^\star}) 
		\bigg) 
		=
		\mathrm{Var}(\widetilde{\xi}_{i^\star, 1}(1 - X_{i^\star}) + \widetilde{\xi}_{i^\star,2} X_{i^\star}) %\notag \\
		=  \frac{(\widetilde{\xi}_{i^\star,2} - \widetilde{\xi}_{i^\star,1})^2}{12}. 
	\end{equation}
	Using~\eqref{eq:computing:integral:two} and~\eqref{eq:computing:integral:three} in~\eqref{eq:computing:integral:one}, and observing that $\mathbb{E}(\phi_{i^\star,j} (X_{i^\star}) ) = 1/2$ for $j=1,2$, concludes the proof.
\end{proof}

\begin{proof}[Proof of Proposition~\ref{prop:maxmod:new:knot}]
	As in the proof of Proposition~\ref{prop:maxmod:new:variable}, we write
	\[
	I_{\subdiv,i^\star,t} 
	=
	\mathbb{E} 
	\Bigg( 
	\bigg(
	\sum_{i \in \activeset} 
	\widehat{Y}_i (X_i) 
	- 
	\sum_{i \in \activeset} 
	\widehat{Y}_{i^\star,t,i} (X_i) 
	\bigg)^2
	\Bigg),
	\]
	where, for $i \in \activeset$,
	$
	\widehat{Y}_i
	=
	\sum_{j=1}^{m_i}
	\widehat{\xi}_{i,j} 
	\phi_{i,j},
	$
	and 
	$
	\widehat{Y}_{i^\star ,t, i}
	=
	\sum_{j=1}^{\widetilde{m}_i}
	\widetilde{\xi}_{i,j} 
	\widetilde{\phi}_{i,j},
	$
	where $\widetilde{\phi}_{i,j} = \phi_{i,j} $ for $i \neq i^\star$ and where $\widetilde{\phi}_{i^\star,j}$ is defined as in $\phi_j$ in Lemma~\ref{lemma:moments} from the subdivision $\widetilde{\subdiv}_{i^\star} = \subdiv_i \cup \{t\}$.
	
	We express $\widehat{Y}_{i^\star}$ from the current subdivision $\subdiv_{i^\star}$ to the refined subdivision $\widetilde{\subdiv}_{i^\star}$, as in Proposition SM2.1 in~\cite{Bachoc2022MaxMod}, which yields	
	\[
	\widehat{Y}_{i^\star} = 
	\sum_{j=1}^{\widetilde{m}_{i^\star}}
	\bar{\xi}_{i^\star,j}
	\widetilde{\phi}_{i^\star,j}.
	\]
	Then we can carry out the same computations as in the proof of Proposition~\ref{prop:maxmod:new:variable} (as if we had $\widetilde{\xi}_{i^\star,j} = 0$ for $j=1,2$ in that proof) to conclude, also using Lemma~\ref{lemma:moments}. 	
\end{proof}

\subsection{Real application: flood study of the Vienne river (France)}
\label{app:Mascaret}

\subsubsection{Database description}
\label{app:Mascaret:subsec:database}
The database consists of numerical simulations using the software Mascaret~\cite{Petit2016Mascaret}, which is a 1-dimensional free surface flow modeling industrial solver based on the Saint-Venant equations~\cite{Goutal2012Mascaret,Goutal2002Mascaret}. It is composed of an output $H$ representing the water level and 37 inputs depending on a value of flow upstream that is disturbed by a value $dQ$, on data of the geometry of the bed that are disturbed by modifying the gradients of quantities $dZ_{ref}$, and on Strickler friction coefficients (${cf}_2$ for the major bed and ${cf}_1$ for the minor bed). More precisely, the inputs correspond to:
\begin{itemize}
	\item 12 Strickler coefficients corresponding to ${cf}_1$, denoted as $X_1, \ldots, X_{12}$, whose distributions are uniform over $[20, 40]$;
	\item 12 Strickler coefficients corresponding to ${cf}_2$, denoted as $X_{13}, \ldots, X_{24}$, whose distributions are uniform over $[10, 30]$;
	\item 12 $dZ_{ref}$ gradient perturbations, denoted as $X_{25}, \ldots, X_{36}$, with standard normal distributions truncated on $[-3, 3]$;
	\item and 1 upstream flow disturbance value $dQ$, denoted as $X_{37}$, with a centered normal distribution with standard deviation $\sigma = 50$ and truncated over $[-150, 150]$.
\end{itemize}
In~\cite{Petit2016Mascaret}, the laws (either uniform or truncated Gaussian) of the input parameters have been chosen arbitrarily, according to the empirical distributions observed during experimental campaigns. Moreover, these random variables have been assumed independent. 

Developments in Section~\ref{sec:MaxMod:subsec:MaxModSolution} assume that the input variables are independent and uniformly distributed. To account for laws different from the uniform one (e.g. the truncated Gaussian law), new analytic expressions of the squared-norm criterion imply more technical developments that are not provided in this work. However, the expectations in Appendix~\ref{app:maxmod} can still be approximated via Monte Carlo. In our numerical experiments, we preferred to apply a quantile transformation of the input space for having independent uniformly distributed random variables $X_1, \ldots, X_{37}$ on $\domainDone^{37}$. It can be shown that this transformation preserves the monotonicity constraints and additive structure.

\subsubsection{Additional results}
\label{app:Mascaret:subsec:results}
Figures~\ref{fig:MascaretResults2ActRep} and~\ref{fig:MascaretResults2Q2} show the additional results discussed in Section~\ref{sec:numResults:subsec:realApp} for $n = 3d$ and $n = 4d$.
\begin{figure}
	\centering
	{\includegraphics[width=\linewidth]{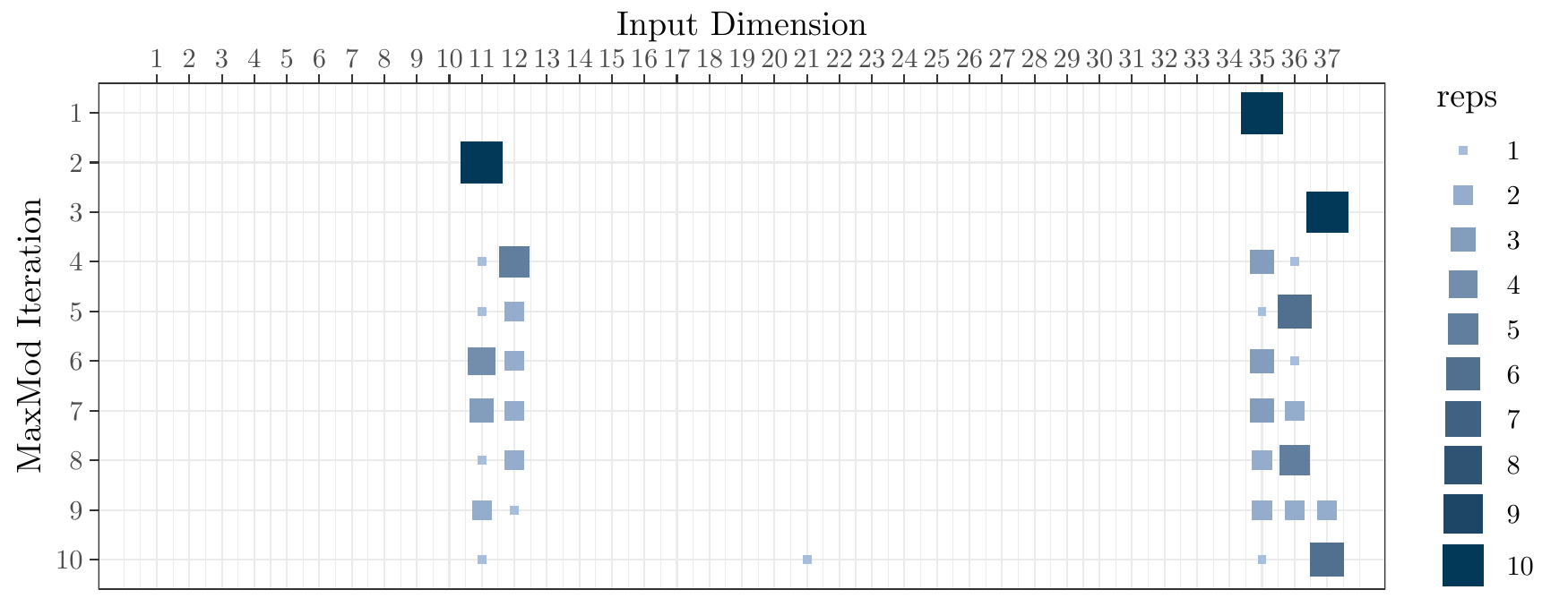}}
	{\includegraphics[width=\linewidth]{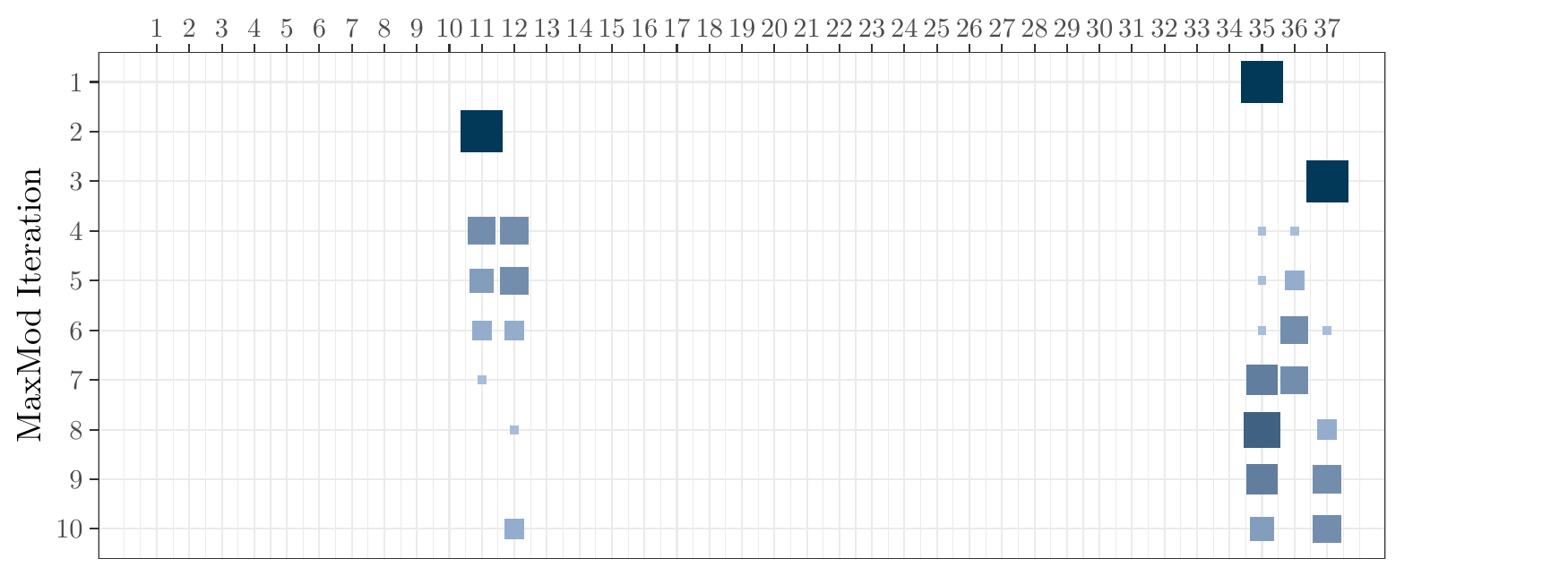}}
	\caption{The choice made by MaxMod for the ten replicates considering the flood application in Section~\ref{sec:numResults:subsec:realApp} with (top) $n = 3d = 111$ and (bottom) $n = 4d = 148$.}	
	\label{fig:MascaretResults2ActRep}
	%\end{figure}	
%\begin{figure}[h!]
	\bigskip
	
	{\includegraphics[width=\linewidth]{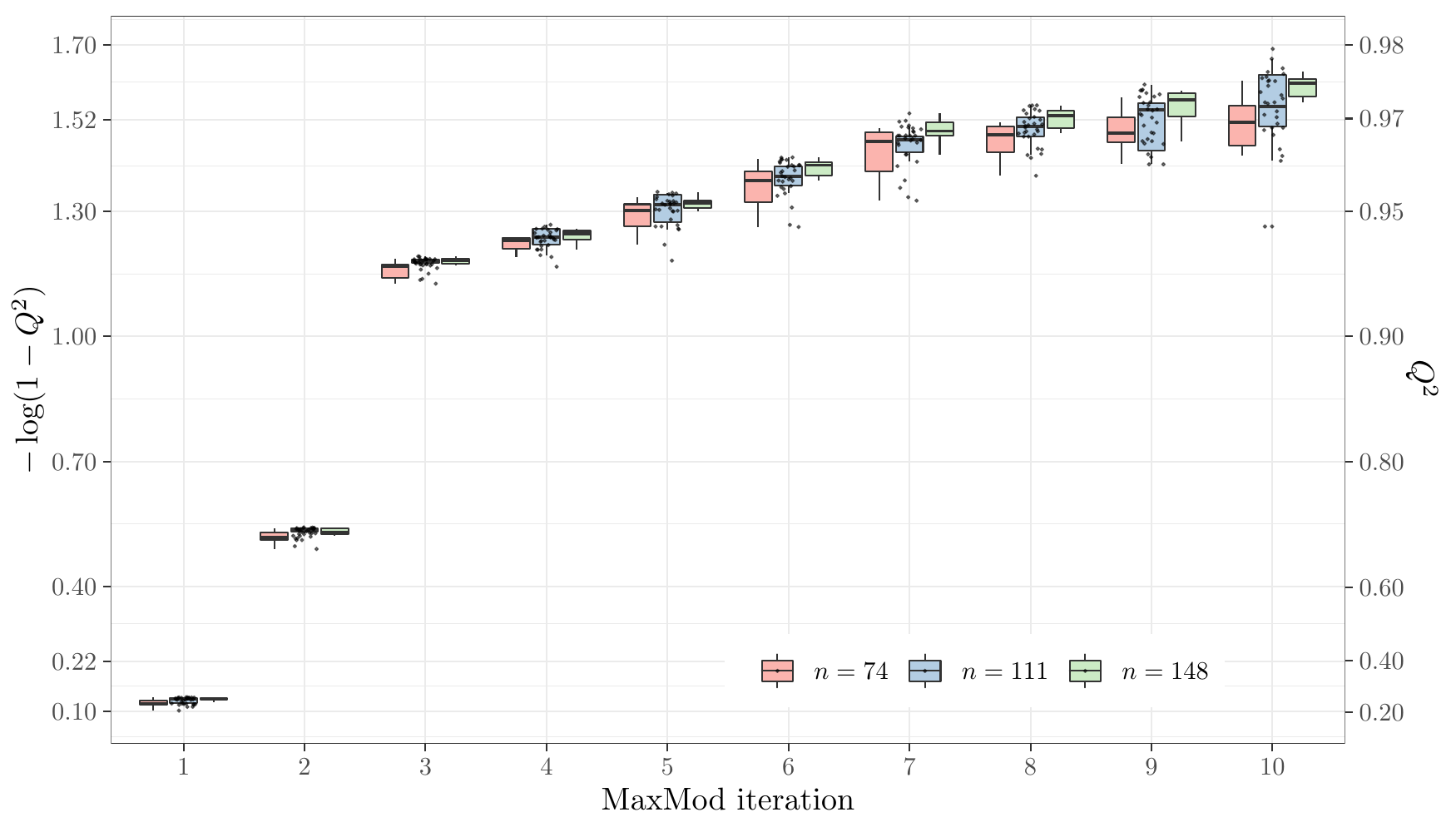}}
	\caption{$Q^2$ boxplots per iteration of MaxMod for the flood application in Section~\ref{sec:numResults:subsec:realApp}. Results are shown for $n = 2d, 3d, 4d$.}	
	\label{fig:MascaretResults2Q2}
\end{figure}

%%%%% 
%% Authors may submit, in a separate file, up to 100MB of supplementary material, such as appendices, proofs, derivations, data, or source code; all supplementary material must be in PDF or ZIP format. 
%Optionally include extra information (complete proofs, additional experiments and plots) in the appendix. This section will often be part of the supplemental material.
%%%%%

\end{document}